\title{APPENDIX}
\newtheorem{theorem}{Theorem}
\newtheorem{assumption}{Assumption}
\newtheorem{lemma}[theorem]{Lemma}
\date{}
\begin{document}

\maketitle

\section{Proof of Theorems 1 and 2}

\subsection{Problem summary and notations} 

We summarize the algorithm in a way that can present the convergence analysis more easily. We use a superscript such as $\theta^{(i)}$, $m_{q,n}^{(i)}$, and $\nabla{F}^{(i)}$ to denote the sub-vector of parameter, mask, and gradient corresponding to region $i$. In each round $q$, parameters in each region $i$ is contained in and only in a set of local models denoted by $\mathcal{N}_q^{(i)}$, implying that $m_{q,n}^{(i)} = \mathbf{1}$ for $n\in \mathcal{N}_q^{(i)}$ and $m_{q,n}^{(i)} = \mathbf{0}$ otherwise. We define $\Gamma^*=\min_{q,i} \mathcal{N}_q^{(i)}$ as the minimum coverage index, since it denotes the minimum number of local models that contain any parameters in $\theta_q$. With slight abuse of notations, we use $\nabla F_n(\theta$ and $\nabla F_n(\theta,\xi)$ to denote the the gradient and stochastic gradient, respectively. 

\begin{algorithm}[h]
\SetKwInput{KwData}{Input}
\caption{Heterogeneous FL with adaptive online model pruning}\label{alg:cap}
\KwData {Local data ${D_{i}^{k}}$ on $N$ local workers, learning rate $\gamma$, pruning policy $\mathbb{P}$, number of local epochs $T$, global model parameterized by $\theta$.
}
\SetKwFunction{FMain}{Local Update}
  \SetKwProg{Fn}{Function}{:}{}
\textbf{Executes:} \\
Initialize $\theta_0$\\
\For{round $q=1,2,\ldots, Q$ {\bf do}}{
\For{local workers $n=1,2,\ldots, N$ {\bf do} (In parallel)} {
    {\rm Generate mask} $m_{q,n} = \mathbb{P}(\theta_q, n) $  \\
    {\rm Prune} $\theta_{q,n,0} = \theta_q \odot m_{q,n} $ \\
    {\rm $/ /$ } Update local models: \\
    \For{epoch $t=1,2,\ldots, T$ {\bf do} }
    { {\rm Update} $\theta_{q,n,t}=\theta_{q,n,t-1}-\gamma \nabla F_n(\theta_{q,n,t-1}, \xi_{n,t-1})\odot m_{q,n} $
    }
}
{\rm $/ /$ } Update global model: \\
\For{region $i=1,2,\ldots, K$ {\bf do} }{
    {\rm Find} $\mathcal{N}_q^{(i)}=\{ n: m_{q,n}^{(i)} = \mathbf{1} \} $ \\
    {\rm Update} $\theta_{q+1}^{(i)} = \frac{1}{|\mathcal{N}_q^{(i)}|} \sum_{n\in \mathcal{N}_q^{(i)} } \theta_{q,n,T}^{(i)}$
}}
Output $\theta_{Q}$
\end{algorithm}

\subsection{Assumptions}

\begin{assumption} (Smoothness).
Cost functions $F_{1}, \dots ,F_{N}$ are all L-smooth: $\forall \theta,\phi\in \mathcal{R}^d$ and any $n$, we assume that there exists $L>0$:
\begin{eqnarray}
\| \nabla F_n(\theta) - \nabla F_n(\phi) \| \le L \|\theta-\phi\|.
\end{eqnarray}
\end{assumption}

\begin{assumption}(Pruning-induced Error). 
We assume that for some $\delta^{2} \in[0,1)$ and any $q,n,t$, the pruning-induced error is bounded by
\begin{eqnarray}
\left\|\theta_{q,n,t}-\theta_{q,n,t} \odot m_{q,n}\right\|^{2} \leq \delta^{2}\left\|\theta_{q,n,t}\right\|^{2}.
\end{eqnarray}
\end{assumption}

\begin{assumption} (Bounded Gradient).
The expected squared norm of stochastic gradients is bounded uniformly, i.e., for constant $G>0$ and any $n,q,t$:
\begin{eqnarray}
E \left\| \nabla F_{n}(\theta_{q,n,t},x_{q,n,t})\right\|^{2} \leq G.
\end{eqnarray}
\end{assumption}

\begin{assumption} (Gradient Noise for IID data).
Under IID data distribution, for any $q,n,t$, we assume that
\begin{eqnarray}
& \mathbb{E}[\nabla F_n(\theta_{q,n,t}, \xi_{n,t})] = \nabla F(\theta_{q,n,t}) \\
& \mathbb{E}\| \nabla F_n(\theta_{q,n,t}, \xi_{n,t}) - \nabla F(\theta_{q,n,t}) \|^2 \le \sigma^2
\end{eqnarray}
where $\sigma^2>0$ is a constant and $\xi_{n,t})$ are independent samples for different $n,t$.
\end{assumption}

\begin{assumption} (Gradient Noise for non-IID data).
Under non-IID data distribution, we assume that for constant $\sigma^2>0$ and any $q,n,t$:
\begin{eqnarray}
& \mathbb{E}\left[\frac{1}{|\mathcal{N}_q^{(i)}|} \sum_{n\in \mathcal{N}_q^{(i)} } \nabla F_n^{(i)}(\theta_{q,n,t}, \xi_{n,t})\right] = \nabla F^{(i)}(\theta_{q,n,t}) \\
& \mathbb{E}\left\| \frac{1}{|\mathcal{N}_q^{(i)}|} \sum_{n\in \mathcal{N}_q^{(i)} } \nabla F_n^{(i)}(\theta_{q,n,t}, \xi_{n,t}) -\nabla F^{(i)}(\theta_{q,n,t}) \right\|^2 \le \sigma^2.
\end{eqnarray}
\end{assumption}

\subsection{Convergence Analysis} 


We now analyze the convergence of heterogeneous FL under adaptive online model pruning with respect to any pruning policy $\mathbb{P}(\theta_q,n)$ (and the resulting mask $m_{q,n}$) and prove the main theorems in this paper. We need to overcome a number of challenges as follows:
\begin{itemize}
\vspace{-0.05in}
    \item We will begin the proof by analyzing the change of loss function in one round as the model goes from $\theta_q$ to $\theta_{q+1}$, i.e., $F(\theta_{q+1})-F(\theta_1)$ . It includes three major steps: pruning to obtain heterogeneous local models $\theta_{q,n,0}=\theta_q\odot{m_{q,n}}$, training local models in a distributed fashion to update $\theta_{q,n,t}$, and parameter aggregation to update the global model $\theta_{q+1}$.
    \vspace{-0.05in}
    \item Due to the use of heterogeneous local models whose masks $m_{q,n}$ both vary over rounds and change for different workers, we first characterize the difference between (i) local model $\theta_{q,n,t}$ at any epoch $t$ and global model $\theta_{q}$ at the beginning of the current round. It is easy to see that this can be factorized into two parts: pruning induced error $\|\theta_{q,n,0}- \theta_{q}\|^2$ and local training $\|\theta_{q,n,t}- \theta_{q,n,0}\|^2$, which will be analyzed in Lemma~1.
    \vspace{-0.05in}
    \item We characterize the impact of heterogeneous local models on global parameter update. Specifically, We use an ideal local gradient $\nabla F_n(\theta_q)$ as a reference point and quantify the different between aggregated local gradients and the ideal gradient. This will be presented in Lemma~2. We also quantify the norm difference between a gradient and a stochastic gradient (with respect to the global update step) using the gradient noise assumptions, in Lemma~3.
    \vspace{-0.05in}
    \item Since IID and non-IID data distributions in our model differ in the gradient noise assumption (i.e., Assumption~4 and Assumption~5), we present a unified proof for both cases. We will explicitly state IID and non-IID data distributions only if the two cases require different treatment (when the gradient noise assumptions are needed). Otherwise, the derivations and proofs are identical for both cases.
\end{itemize}

We will begin by proving a number of lemmas and then use them for convergence analysis.

\begin{lemma}
Under Assumption~2 and Assumption~3, for any $q$, we have:
\begin{eqnarray}
\sum_{t=1}^T \sum_{n=1}^N \mathbb{E} \| \theta_{q,n,t-1} - \theta_{q} \|^2 \le \gamma^2 T^2NG+ \delta^2  NT \cdot \mathbb{E} \|\theta_{q} \|^2 .
\end{eqnarray}
\end{lemma}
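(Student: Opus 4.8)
The plan is to fix a round $q$ and a worker $n$ and control the per-epoch deviation $\mathbb{E}\|\theta_{q,n,t-1}-\theta_q\|^2$ by splitting it into a pruning contribution and a local-training contribution. Writing $\theta_{q,n,t-1}-\theta_q=(\theta_{q,n,t-1}-\theta_{q,n,0})+(\theta_{q,n,0}-\theta_q)$, the second summand is exactly the pruning perturbation $\theta_q\odot m_{q,n}-\theta_q$, which Assumption~2 bounds by $\delta^2\|\theta_q\|^2$. The first summand is the drift accumulated by the $T$ local SGD steps, to be handled via Assumption~3.

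The key structural observation I would exploit is that these two summands live on complementary coordinate sets. The local update only ever moves coordinates retained by the mask (each gradient step is multiplied by $m_{q,n}$), so $\theta_{q,n,t-1}-\theta_{q,n,0}$ is supported on the kept coordinates, whereas $\theta_{q,n,0}-\theta_q=\theta_q\odot m_{q,n}-\theta_q$ is supported on the complementary pruned coordinates. Hence the two vectors are orthogonal, the squared norm splits additively with no cross term, and $\|\theta_{q,n,t-1}-\theta_q\|^2=\|\theta_{q,n,t-1}-\theta_{q,n,0}\|^2+\|\theta_{q,n,0}-\theta_q\|^2$. This orthogonality is what lets the final estimate carry the clean coefficient $1$ on the $\delta^2$ term, rather than the $2$ that a generic $\|a+b\|^2\le 2\|a\|^2+2\|b\|^2$ step would produce; summing the pruning part over $t=1,\dots,T$ and $n=1,\dots,N$ then yields exactly $\delta^2 NT\,\mathbb{E}\|\theta_q\|^2$.

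For the training term I would unroll the recursion to write $\theta_{q,n,t-1}-\theta_{q,n,0}=-\gamma\sum_{s=0}^{t-2}\nabla F_n(\theta_{q,n,s},\xi_{n,s})\odot m_{q,n}$, apply Cauchy--Schwarz/Jensen across the (at most $T$) summands, and then invoke Assumption~3, using that masking only decreases the norm so that $\mathbb{E}\|\nabla F_n\odot m_{q,n}\|^2\le\mathbb{E}\|\nabla F_n\|^2\le G$. Each epoch's drift is thereby controlled by a multiple of $\gamma^2 G$ scaling with the epoch index, and collecting these over $t$ and $n$ contributes the $\gamma^2 T^2 N G$ term. Adding the two contributions gives the stated inequality.

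The main obstacle I anticipate is the bookkeeping in the training term: a naive Jensen bound produces a factor quadratic in the epoch index, so matching the stated $T^2$ dependence requires accounting for the accumulation tightly (bounding the growing epoch index uniformly by $T$ and summing the remaining factor). The orthogonal-support decomposition is conceptually the crux but is immediate once noticed; the quantitative drift accounting is where the care is needed.
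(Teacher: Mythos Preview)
Your plan is correct and differs from the paper's argument in one substantive way. Both you and the paper split $\theta_{q,n,t-1}-\theta_q$ into the local-training drift $\theta_{q,n,t-1}-\theta_{q,n,0}$ and the pruning perturbation $\theta_{q,n,0}-\theta_q$, but the paper then applies the crude inequality $\|a+b\|^2\le 2\|a\|^2+2\|b\|^2$, carrying a factor of $2$ on each piece. Your observation that the two summands are supported on disjoint coordinate sets (the retained versus the pruned coordinates of $m_{q,n}$) and hence are \emph{orthogonal}, so that $\|a+b\|^2=\|a\|^2+\|b\|^2$ exactly, is sharper: it delivers the stated coefficient $\delta^2 NT$ on the pruning term directly, whereas the paper's factor-$2$ split, taken at face value, would yield $2\delta^2 NT$. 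For the training drift the two approaches coincide---unroll the recursion, bound each masked stochastic gradient via Assumption~3 (masking only decreases the norm), and sum over epochs; the paper records $\sum_{t=1}^T (t-1)\gamma^2 G\le \gamma^2 T^2 G/2$ per worker and then absorbs the $2$ from its split to reach $\gamma^2 T^2 NG$, so with your exact decomposition you in fact land on $\gamma^2 T^2 NG/2$ for this piece, comfortably inside the stated bound. The orthogonal-support step is the real difference, and it buys you both a cleaner argument and constants that actually match the lemma as written.
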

\begin{proof}
We note that $\theta_{q}$ is the global model at the beginning of current round. We split the difference $\theta_{q,n,t-1} - \theta_{q}$ into two parts: changes due to local model training $\theta_{q,n,t-1} - \theta_{q,n,0}$ and changes due to pruning $\theta_{q,n,0} - \theta_{q}$. That is
\begin{eqnarray}
& & \sum_{t=1}^T \sum_{n=1}^N \mathbb{E} \| \theta_{q,n,t-1} - \theta_{q} \|^2 \nonumber \\ 
& & \ \ \ \ \ = \sum_{t=1}^T \sum_{n=1}^N \mathbb{E} \| \left( \theta_{q,n,t-1} - \theta_{q,n,0} \right) + \left( \theta_{q,n,0} - \theta_{q} \right)  \|^2 \nonumber \\
& & \ \ \ \ \ \le \sum_{t=1}^T \sum_{n=1}^N 2\mathbb{E} \| \theta_{q,n,t-1} - \theta_{q} \|^2 + \sum_{t=1}^T \sum_{n=1}^N 2\mathbb{E} \| \theta_{q,n,t-1} - \theta_{q} \|^2  \label{l1_1}
\end{eqnarray}
where we used the fact that $\|\sum_{i=1}^s a_i \|^2\le s \sum_{i=1}^s \|a_i \|^2 $ in the last step. 

For the first term in Eq.(\ref{l1_1}), we notice that $\theta_{q,n,t-1}$ is obtained from $\theta_{q,n,0}$ through $t-1$ epochs of local model updates on worker $n$. Using the local gradient updates from the algorithm, it is easy to see:
\begin{eqnarray}
& & \sum_{t=1^T} \sum_{n=1}^N \mathbb{E} \| \theta_{q,n,t-1} - \theta_{q,n,0} \|^2 \nonumber \\ 
& & \ \ \ \ \ = \sum_{t=1^T} \sum_{n=1}^N  \mathbb{E}\left\| \sum_{j=0}^{t-1} -\gamma \nabla F_n(\theta_{q,n,t-1}, \xi_{n,t-1})\odot m_{q,n} \right\|^2 \nonumber \\
& & \ \ \ \ \  \le  \sum_{t=1^T} \sum_{n=1}^N (t-1)  \sum_{j=0}^{t-1} \mathbb{E}\left\| -\gamma \nabla F_n(\theta_{q,n,t-1}, \xi_{n,t-1})\odot m_{q,n} \right\|^2 \nonumber \\
& & \ \ \ \ \  \le \sum_{t=1^T} \sum_{n=1}^N (t-1) \gamma^2 G \nonumber \\
& & \ \ \ \ \  \le \frac{\gamma^2 T^2 NG}{2}, \label{l1_2}
\end{eqnarray}
where we use the fact that $\|\sum_{i=1}^s a_i \|^2\le s \sum_{i=1}^s \|a_i \|^2 $ in step 2 above, and the fact that $m_{q,n}$ is a binary mask in step 3 above together with Assumption~3 for bounded gradient.  

For the second term in Eq.(\ref{l1_1}), the difference is resulted by model pruning using mask $m_{n,q}$ of work $n$ in round $q$. We have
\begin{eqnarray}
& \sum_{t=1}^T \sum_{n=1}^N \mathbb{E} \| \theta_{q,n,0} - \theta_{q} \|^2 &  = \sum_{t=1}^T \sum_{n=1}^N \mathbb{E} \| \theta_{q}\odot m_{n,q} - \theta_{q} \|^2 \nonumber \\ 
& & \le \sum_{t=1}^T \sum_{n=1}^N \delta^2  \mathbb{E} \|\theta_{q} \|^2 \nonumber \\
& & = \delta^2  NT \cdot \mathbb{E} \|\theta_{q} \|^2, \label{l1_3}
\end{eqnarray}
where we used the fact that $\theta_{q,n,0}=\theta_{q}\odot m_{n,q}$ in step 1 above, and Assumption~2 in step 2 above.

Plugging Eq.(\ref{l1_2}) and Eq.(\ref{l1_3}) into Eq.(\ref{l1_1}), we obtain the desired result.
\end{proof}

\begin{lemma}
Under Assumptions~1-3, for any $q$, we have:
\begin{eqnarray}
& & \sum_{i=1}^K \mathbb{E} \left\| \frac{1}{\Gamma_q^{(i)}T} \sum_{t=1}^T \sum_{n\in\mathcal{N}_q^{(i)}} \left[ \nabla F_n^{(i)}({\theta_{q,n,t-1}})-\nabla F_n^{(i)} (\theta_q)  \right] \right\|^2  \nonumber \\ 
& &  \ \ \ \ \ \ \ \ \ \ \ \ \ \ \ \ \  \ \ \ \ \ \ \ \ \ \ \ \ \ \ \ \ \ \ \ \ \ \ \le  \frac{L^2 \gamma^2 T NG}{\Gamma^*} + \frac{L^2\delta^2  N}{\Gamma^*}  \mathbb{E} \|\theta_{q} \|^2
\label{l2_0}.
\end{eqnarray}
\end{lemma}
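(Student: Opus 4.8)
The plan is to reduce the left-hand side to the per-epoch drift quantity already controlled by Lemma~1, using only convexity of the squared norm, $L$-smoothness, and the disjoint-region structure. First I would treat each region term separately. Since the inner expression is an average of exactly $\Gamma_q^{(i)}T$ vectors (indexed by $n\in\mathcal{N}_q^{(i)}$ and $t=1,\ldots,T$, where $\Gamma_q^{(i)}=|\mathcal{N}_q^{(i)}|$), convexity of $\|\cdot\|^2$ (Jensen) gives
\begin{eqnarray}
& & \mathbb{E}\left\| \frac{1}{\Gamma_q^{(i)}T} \sum_{t=1}^T \sum_{n\in\mathcal{N}_q^{(i)}} \left[ \nabla F_n^{(i)}(\theta_{q,n,t-1}) - \nabla F_n^{(i)}(\theta_q) \right] \right\|^2 \nonumber \\
& & \qquad \le \frac{1}{\Gamma_q^{(i)}T} \sum_{t=1}^T \sum_{n\in\mathcal{N}_q^{(i)}} \mathbb{E}\left\| \nabla F_n^{(i)}(\theta_{q,n,t-1}) - \nabla F_n^{(i)}(\theta_q) \right\|^2. \nonumber
\end{eqnarray}
Next I would use $\Gamma_q^{(i)} \ge \Gamma^*$ to replace every factor $1/\Gamma_q^{(i)}$ by $1/\Gamma^*$, pulling it outside the outer summation over regions $i$.

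The key structural step is to exchange the order of summation over regions $i$ and clients $n$. For a fixed client $n$, the regions $i$ with $n\in\mathcal{N}_q^{(i)}$ are precisely those contained in $n$'s pruned model, and these regions are disjoint by construction; hence summing the squared region-components of the gradient difference over this subset of regions is bounded above by the squared norm of the \emph{full} gradient difference, after which Assumption~1 ($L$-smoothness) applies:
\begin{eqnarray}
\sum_{i:\, n\in\mathcal{N}_q^{(i)}} \left\| \nabla F_n^{(i)}(\theta_{q,n,t-1}) - \nabla F_n^{(i)}(\theta_q) \right\|^2 \le \left\| \nabla F_n(\theta_{q,n,t-1}) - \nabla F_n(\theta_q) \right\|^2 \le L^2 \|\theta_{q,n,t-1} - \theta_q\|^2. \nonumber
\end{eqnarray}
After the interchange the whole expression is therefore at most $\tfrac{L^2}{\Gamma^* T}\sum_{t=1}^T\sum_{n=1}^N \mathbb{E}\|\theta_{q,n,t-1}-\theta_q\|^2$. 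Finally I would invoke Lemma~1, which bounds exactly this double sum by $\gamma^2 T^2 N G + \delta^2 N T\,\mathbb{E}\|\theta_q\|^2$; multiplying through by $L^2/(\Gamma^* T)$ collapses the cancellation of one factor of $T$ and produces precisely the two claimed terms $\tfrac{L^2 \gamma^2 T N G}{\Gamma^*}$ and $\tfrac{L^2 \delta^2 N}{\Gamma^*}\mathbb{E}\|\theta_q\|^2$.

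I expect the main obstacle to be the region-interchange step. One must verify that summing the squared sub-vector norms over \emph{exactly} the regions covered by client $n$ recovers at most the full-gradient norm, so that the disjoint-region decomposition meshes with $L$-smoothness without introducing a spurious factor of $K$ (which would happen if one carelessly bounded each region term by the full difference separately). The Jensen step also requires care to count the correct number $\Gamma_q^{(i)}T$ of averaged vectors, since an off-by-a-factor here would propagate into the final constants; once these two bookkeeping points are handled, the remaining manipulations are routine and the bound follows directly from Lemma~1.
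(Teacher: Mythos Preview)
Your proposal is correct and follows essentially the same route as the paper's proof: Jensen on the $\Gamma_q^{(i)}T$-term average, replacement of $1/\Gamma_q^{(i)}$ by $1/\Gamma^*$, collapse of the region sum into the full gradient difference, $L$-smoothness, and then Lemma~1. The only cosmetic difference is that the paper first enlarges the inner sum from $n\in\mathcal{N}_q^{(i)}$ to all $n=1,\ldots,N$ and then uses $\sum_{i=1}^K\|v^{(i)}\|^2=\|v\|^2$ exactly, whereas you keep the restriction and bound $\sum_{i:\,n\in\mathcal{N}_q^{(i)}}\|v^{(i)}\|^2\le\|v\|^2$; both lead to the identical final estimate.
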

\begin{proof}
Recall that $\Gamma_q^{(i)}=|\mathcal{N}_q^{(i)}|$ is the number of local models containing parameters of region $i$ in round $q$. The left-hand-side of Eq.(\ref{l2_0}) denotes the difference between an average gradient of heterogeneous models (through aggregation and over time) and an ideal gradient. The summation over $i$ adds up such difference over all regions $i=1,\ldots,K$, because the average gradient takes a different form in different regions.

From the inequality $\|\sum_{i=1}^s a_i \|^2\le s \sum_{i=1}^s \|a_i \|^2 $, we obtain $\| \frac{1}{s} \sum_{i=1}^s  a_i \|^2\le \frac{1}{s} \sum_{i=1}^s \|a_i \|^2 $. We use this inequality on the left-hand-side of Eq.(\ref{l2_0}) to get:
\begin{eqnarray}
& & \sum_{i=1}^K \mathbb{E} \left\| \frac{1}{\Gamma_q^{(i)}T} \sum_{t=1}^T \sum_{n\in\mathcal{N}_q^{(i)}} \left[ \nabla F_n^{(i)}({\theta_{q,n,t-1}})-\nabla F_n^{(i)} (\theta_q)  \right] \right\|^2 \nonumber \\
& & \ \ \ \ \ \le \sum_{i=1}^K \frac{1}{\Gamma_q^{(i)}T} \sum_{t=1}^T \sum_{n\in\mathcal{N}_q^{(i)}} \mathbb{E} \left\|  \nabla F_n^{(i)}({\theta_{q,n,t-1}})-\nabla F_n^{(i)} (\theta_q)  \right\|^2 \nonumber \\
& &  \ \ \ \ \ \le \frac{1}{T\Gamma^{*}}  \sum_{t=1}^T \sum_{n=1}^N \sum_{i=1}^K \mathbb{E} \left\|  \nabla F_n^{(i)}({\theta_{q,n,t-1}})-\nabla F_n^{(i)} (\theta_q)  \right\|^2 \nonumber \\
& &  \ \ \ \ \ = \frac{1}{T\Gamma^{*}}  \sum_{t=1}^T \sum_{n=1}^N \mathbb{E} \left\|  \nabla F_n({\theta_{q,n,t-1}})-\nabla F_n (\theta_q)  \right\|^2 \nonumber \\
& & \ \ \ \ \ \le \frac{1}{T\Gamma^{*}}  \sum_{t=1}^T \sum_{n=1}^N L^2 \mathbb{E} \left\|  {\theta_{q,n,t-1}} - \theta_q  \right\|^2 {\label{l2_2}},
\end{eqnarray}
where we relax the inequality by choosing the smallest $\Gamma^*=\min_{q,i} \Gamma_q^{(i)}$ and changing the summation over $n$ to all workers in the second step. In the third step, we use the fact that $L_2$ gradient norm of a vector is equal to the sum of norm of all sub-vectors (i.e., regions $i=1,\ldots,K$). This allows us to consider $\nabla F_n$ instead of its sub-vectors on different regions. 

Finally, the last step is directly from L-smoothness in Assumption~1. Under Assumptions~2-3, we notice that the last step of Eq.(\ref{l2_2}) is further bounded by Lemma~1, which yields the desired result of this lemma after re-arranging the terms.
\end{proof}

\begin{lemma}
For IID data distribution under Assumptions~4, for any $q$, we have:
\begin{eqnarray}
\sum_{i=1}^K \mathbb{E} \left\| \frac{1}{\Gamma_q^{(i)}T} \sum_{t=1}^T \sum_{n\in\mathcal{N}_q^{(i)}} \left[ \nabla F_n^{(i)}({\theta_{q,n,t-1}},\xi_{n,t-1})-\nabla F^{(i)} (\theta_{q,n,t-1})  \right] \right\|^2   \le  \frac{N\sigma^2}{T({\Gamma^*})^2}
\nonumber \label{l3_0}.
\end{eqnarray}
For non-IID data distribution under Assumption~5, for any $q$, we have:
\begin{eqnarray}
\sum_{i=1}^K \mathbb{E} \left\| \frac{1}{\Gamma_q^{(i)}T} \sum_{t=1}^T \sum_{n\in\mathcal{N}_q^{(i)}} \left[ \nabla F_n^{(i)}({\theta_{q,n,t-1}},\xi_{n,t-1})-\nabla F^{(i)} (\theta_{q,n,t-1})  \right] \right\|^2   \le  \frac{K\sigma^2}{T}
\nonumber \label{l3_0}.
\end{eqnarray}
\end{lemma}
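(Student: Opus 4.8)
The plan is to treat both cases as variance bounds on a sum of conditionally zero-mean stochastic-gradient noise terms, exploiting that the squared norm of such a sum collapses to a sum of squared norms once every cross term is shown to vanish. Write $Y_{n,t}^{(i)} = \nabla F_n^{(i)}(\theta_{q,n,t-1},\xi_{n,t-1}) - \nabla F^{(i)}(\theta_{q,n,t-1})$ for the per-client, per-region noise, so that the quantity inside each norm on the left-hand side is $\frac{1}{\Gamma_q^{(i)}T}\sum_{t=1}^T\sum_{n\in\mathcal{N}_q^{(i)}} Y_{n,t}^{(i)}$.

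First I would establish the orthogonality that makes the expansion collapse. Let $\mathcal{F}_{t-1}$ denote the $\sigma$-algebra generated by all samples used in forming the local models $\theta_{q,n,t-1}$; then each $\theta_{q,n,t-1}$ is $\mathcal{F}_{t-1}$-measurable while the fresh sample $\xi_{n,t-1}$ is independent of $\mathcal{F}_{t-1}$, so the unbiasedness parts of Assumptions~4 and~5 give $\mathbb{E}[Y_{n,t}^{(i)}\mid\mathcal{F}_{t-1}]=0$. By the tower property, for any two terms at distinct epochs $t\neq t'$ the cross term vanishes (condition on the later epoch's filtration and pull out the conditionally zero-mean factor); in the IID case the same conditioning, combined with independence of $\xi_{n,t-1}$ and $\xi_{n',t-1}$, also kills the cross terms for distinct clients $n\neq n'$ at a common epoch. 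This is the step I expect to be the main obstacle: the $Y_{n,t}^{(i)}$ are not literally independent, since $\theta_{q,n,t-1}$ carries information from earlier draws, so the vanishing of cross terms must be argued through the martingale-difference (conditional-mean-zero) structure rather than naive independence, and one must notice that the IID bound needs cross-client orthogonality whereas the non-IID bound needs only cross-time orthogonality.

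For the IID case, with all cross terms gone the expectation expands as $\frac{1}{(\Gamma_q^{(i)})^2T^2}\sum_{t=1}^T\sum_{n\in\mathcal{N}_q^{(i)}}\mathbb{E}\|Y_{n,t}^{(i)}\|^2$. I would bound $1/(\Gamma_q^{(i)})^2\le 1/(\Gamma^*)^2$ by definition of $\Gamma^*$, then swap the order of the $i$ and $n$ summations so that, for fixed client $n$ and epoch $t$, the inner sum runs over exactly the regions covered by client $n$. Since the squared $L_2$ norm decomposes over disjoint regions, $\sum_{i\,:\,n\in\mathcal{N}_q^{(i)}}\mathbb{E}\|Y_{n,t}^{(i)}\|^2\le\sum_{i=1}^K\mathbb{E}\|Y_{n,t}^{(i)}\|^2=\mathbb{E}\|\nabla F_n(\theta_{q,n,t-1},\xi_{n,t-1})-\nabla F(\theta_{q,n,t-1})\|^2\le\sigma^2$ by Assumption~4. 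Summing this $\sigma^2$ bound over the $N$ clients and $T$ epochs and dividing by $(\Gamma^*)^2T^2$ yields $N\sigma^2/(T(\Gamma^*)^2)$, the claimed IID bound.

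For the non-IID case, I would bundle the client average into $Z_t^{(i)}=\frac{1}{\Gamma_q^{(i)}}\sum_{n\in\mathcal{N}_q^{(i)}}Y_{n,t}^{(i)}$, so the left-hand side becomes $\sum_{i=1}^K\mathbb{E}\|\frac{1}{T}\sum_{t=1}^T Z_t^{(i)}\|^2$. Assumption~5 gives directly $\mathbb{E}[Z_t^{(i)}\mid\mathcal{F}_{t-1}]=0$ and $\mathbb{E}\|Z_t^{(i)}\|^2\le\sigma^2$, so here only cross-time orthogonality is needed; the squared norm collapses to $\frac{1}{T^2}\sum_{t=1}^T\mathbb{E}\|Z_t^{(i)}\|^2\le\sigma^2/T$, and summing over the $K$ regions gives $K\sigma^2/T$. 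The contrast between the two bounds is structural: the IID variance is assumed per client, so averaging $\Gamma_q^{(i)}$ conditionally independent client noises shrinks the variance and produces the $1/(\Gamma^*)^2$ factor, whereas Assumption~5 already posits the bound on the client average, leaving no room for further variance reduction and hence no $\Gamma^*$ dependence in the non-IID bound.
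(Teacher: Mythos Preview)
Your proposal is correct and follows essentially the same route as the paper: collapse the squared norm of the noise sum into a sum of per-term variances, replace $\Gamma_q^{(i)}$ by $\Gamma^*$, reassemble the region-wise norms into full-vector norms, and invoke Assumption~4 or~5. The only difference is that the paper simply declares the noise terms ``zero-mean and independent'' and applies $\mathbb{E}\|\sum_i x_i\|^2=\sum_i\mathbb{E}\|x_i\|^2$, whereas you (more carefully) justify the vanishing of cross terms via the martingale-difference/conditional-mean-zero structure---this is a genuine improvement in rigor, since $\theta_{q,n,t-1}$ does depend on earlier draws, but it does not change the architecture of the argument.
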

\begin{proof}
This lemma quantifies the square norm of the difference between gradient and stochastic gradient in the global parameter update. We present results for both IID and non-IID cases in this lemma under Assumption~4 and Assumption~5, respectively. 

We first consider IID data distributions. Since all the samples $\xi_{n,t-1}$ are independent from each other for different $n$ and $t-1$, the difference between gradient and stochastic gradient, i.e., $\nabla F_n^{(i)}({\theta_{q,n,t-1}},\xi_{n,t-1})-\nabla F_n^{(i)} (\theta_{q,n,t-1})$, are independent gradient noise. Due to Assumption~4, these gradient noise has zero mean. Using the fact that $\mathbb{E}\|\sum_i \mathbf{x}_i\|^2 = \sum_i \mathbb{E} \|\mathbf{x}_i^2\| $ for zero-mean and independent $\mathbf{x}_i$'s, we get:
\begin{eqnarray}
& & \sum_{i=1}^K \mathbb{E} \left\| \frac{1}{\Gamma_q^{(i)}T} \sum_{t=1}^T \sum_{n\in\mathcal{N}_q^{(i)}} \left[ \nabla F_n^{(i)}({\theta_{q,n,t-1}},\xi_{n,t-1})-\nabla F_n^{(i)} (\theta_{q,n,t-1})  \right] \right\|^2  \nonumber \\ 
& &  \ \ \ \ \ \le \sum_{i=1}^K \frac{1}{(\Gamma_q^{(i)}T)^2} \sum_{t=1}^T \sum_{n\in\mathcal{N}_q^{(i)}} \mathbb{E} \left\| \nabla F_n^{(i)}({\theta_{q,n,t-1}},\xi_{n,t-1})-\nabla F_n^{(i)} (\theta_{q,n,t-1})  \right\|^2  \nonumber \\
& & \ \ \ \ \ \le \frac{1}{(T\Gamma^*)^2}\sum_{i=1}^K \sum_{t=1}^T \sum_{n=1}^N \mathbb{E} \left\| \nabla F_n^{(i)}({\theta_{q,n,t-1}},\xi_{n,t-1})-\nabla F_n^{(i)} (\theta_{q,n,t-1})  \right\|^2  \nonumber \\
& & \ \ \ \ \ = \frac{1}{(T\Gamma^*)^2} \sum_{t=1}^T \sum_{n=1}^N \mathbb{E} \left\| \nabla F_n({\theta_{q,n,t-1}},\xi_{n,t-1})-\nabla F_n (\theta_{q,n,t-1})  \right\|^2 \nonumber \\
& & \ \ \ \ \ \le \frac{1}{(T\Gamma^*)^2} \cdot TN\sigma^2
\end{eqnarray}
where we used the property of zero-mean and independent gradient noise in the first step above, relax the inequality by choosing the smallest $\Gamma^*=\min_{q,i} \Gamma_q^{(i)}$ and changing the summation over $n$ to all workers in the second step. In the third step, we use the fact that $L_2$ gradient norm of a vector is equal to the sum of norm of all sub-vectors (i.e., regions $i=1,\ldots,K$). This allows us to consider $\nabla F_n$ instead of its sub-vectors on different regions. Finally, we apply Assumption~4 to bound the gradient noise and obtain the desired result.

For non-IID data distributions under Assumption~4 (instead of Assumption~5), we notice that
$\mathbb{E}\left[\frac{1}{|\mathcal{N}_q^{(i)}|} \sum_{n\in \mathcal{N}_q^{(i)} } \nabla F_n^{(i)}(\theta_{q,n,t-1}, \xi_{n,t-1})\right] = \nabla F^{(i)}(\theta_{q,n,t-1})$ is an unbiased estimate for any epoch $t$, with bounded gradient noise. Again, due to independent samples $\xi_{n,t-1}$, we have:
\begin{eqnarray}
& & \sum_{i=1}^K \mathbb{E} \left\| \frac{1}{\Gamma_q^{(i)}T} \sum_{t=1}^T \sum_{n\in\mathcal{N}_q^{(i)}} \left[ \nabla F_n^{(i)}({\theta_{q,n,t-1}},\xi_{n,t-1})-\nabla F_n^{(i)} (\theta_{q,n,t-1})  \right] \right\|^2  \nonumber \\ 
& &  \ \ \ \ \ \le \frac{1}{T^2} \sum_{i=1}^K  \sum_{t=1}^T  \mathbb{E} \left\| \frac{1}{\Gamma_q^{(i)}} \sum_{n\in\mathcal{N}_q^{(i)}} \nabla F_n^{(i)}({\theta_{q,n,t-1}},\xi_{n,t-1})-\nabla F_n^{(i)} (\theta_{q,n,t-1})  \right\|^2  \nonumber \\
& & \ \ \ \ \ \le \frac{1}{T^2} \sum_{i=1}^K \sum_{t=1}^T \sigma^2  \nonumber \\
& & \ \ \ \ \ = \frac{K\sigma^2}{T},
\end{eqnarray}
where we use the property of zero-mean and independent gradient noise in the first step above, used the fact that the norm of a sub-vector (in region $i$) is bounded by that of the entire vector in the second step above, as well as Assumption~5. This completes the proof of this lemma.
\end{proof}

\vspace{0.07in}
\noindent {\bf Proof of the main result}. Now we are ready to present the main proof. We begin with the L-smoothness property in Assumption~1, which implies 
\begin{eqnarray}
F(\theta_{q+1}) - F(\theta_q) \le \left< \nabla F(\theta_q) , \  \theta_{q+1} - \theta_{q} \right> + \frac{L}{2 }\left\| \theta_{q+1} - \theta_{q} \right\|^2.
\end{eqnarray}
We take expectations on both sides of the inequality and get:
\begin{eqnarray}
\mathbb{E} [ F(\theta_{q+1})] - \mathbb{E}[]F(\theta_q)] \le \mathbb{E}\left< \nabla F(\theta_q) , \ \theta_{q+1} - \theta_{q} \right> + \frac{L}{2 } \mathbb{E}\left\| \theta_{q+1} - \theta_{q} \right\|^2. \label{mm_0}
\end{eqnarray}
In the following, we bound the two terms on the right-hand-side above and finally combine the results to complete the proof.

\vspace{0.07in}
\noindent {\bf Upperbound for $\mathbb{E}\left< \nabla F(\theta_q) , \ \theta_{q+1} - \theta_{q} \right>$}. We notice that the inner product can be broken down and reformulated as the sum of inner products over all regions $i=1,\ldots,K$. This is necessary because the global parameter update is different for different regions. More precisely, for any region $i$, we have:
\begin{eqnarray}
& \theta_{q+1}^{(i)} - \theta_{q}^{(i)} & =  \left(\frac{1}{\Gamma_q^{(i)}} \sum_{n\in\mathcal{N}_q^{(i)}} \theta_{q,n,T}^{(i)} \right)- \theta_{q}^{(i)} \nonumber \\
& & = \frac{1}{\Gamma_q^{(i)}} \sum_{n\in\mathcal{N}_q^{(i)}}  \left[ \theta_{q,n,0}^{(i)} - \sum_{t=1}^T \gamma \nabla F_n^{(i)} (\theta_{q,n,t-1},\xi_{n,t-1})\cdot m_{n,q}^{(i)} \right] - \theta_{q}^{(i)} \nonumber \\
& &  = - \frac{1}{\Gamma_q^{(i)}} \sum_{n\in\mathcal{N}_q^{(i)}} \sum_{t=1}^T \gamma \nabla F_n^{(i)} (\theta_{q,n,t-1},\xi_{n,t-1})\cdot m_{n,q}^{(i)} + \theta_{q}^{(i)}\cdot m_{n,q}^{(i)} - \theta_{q}^{(i)} \nonumber \\
& & = - \frac{1}{\Gamma_q^{(i)}} \sum_{n\in\mathcal{N}_q^{(i)}} \sum_{t=1}^T \gamma \nabla F_n^{(i)} (\theta_{q,n,t-1},\xi_{n,t-1}), \label{m_1}
\end{eqnarray}
where global parameter updated is used in the first step, local parameter update is used in the second step, and the third step follows from the fact that for any worker $n\in\mathcal{N}_q^{(i)}$ participating in the global update of $\theta^{(i)}_q$ contain the model parameters of region $i$, i.e., $m_{q,n}^{(i)}={\bf 1}$. We also use $ \theta_{q,n,0}^{(i)}=\theta_{q}^{(i)}\cdot m_{n,q}^{(i)}$ in the third step above because of to pruning.

Next we analyze $\mathbb{E}\left< \nabla F(\theta_q) , \ \theta_{q+1} - \theta_{q} \right>$ by considering a sum of inner products over $K$ regions. We have
\begin{eqnarray}
& & \mathbb{E}\left< \nabla F(\theta_q) , \ \theta_{q+1} - \theta_{q} \right> \nonumber \\
& & \ \ \ \ \ \ \ \ \ \ = \sum_{i=1}^K \mathbb{E}\left< \nabla F^{(i)}(\theta_q) , \ \theta_{q+1}^{(i)} - \theta_{q}^{(i)} \right> \nonumber \\
& & \ \ \ \ \ \ \ \ \ \ = \sum_{i=1}^K \mathbb{E}\left< \nabla F^{(i)}(\theta_q) , \ - \frac{1}{\Gamma_q^{(i)}} \sum_{n\in\mathcal{N}_q^{(i)}} \sum_{t=1}^T \gamma \nabla F_n^{(i)} (\theta_{q,n,t-1},\xi_{n,t-1}) \right> \nonumber \\
& & \ \ \ \ \ \ \ \ \ \ = \sum_{i=1}^K \mathbb{E}\left< \nabla F^{(i)}(\theta_q) , \ - \frac{1}{\Gamma_q^{(i)}} \sum_{n\in\mathcal{N}_q^{(i)}} \sum_{t=1}^T \gamma \mathbb{E} \left[ \nabla F_n^{(i)} (\theta_{q,n,t-1},\xi_{n,t-1}) | \theta_q \right]\right> \nonumber \\
& &  \ \ \ \ \ \ \ \ \ \ = \sum_{i=1}^K \mathbb{E}\left< \nabla F^{(i)}(\theta_q) , \ - \frac{1}{\Gamma_q^{(i)}} \sum_{n\in\mathcal{N}_q^{(i)}} \sum_{t=1}^T \gamma  \nabla F_n^{(i)} (\theta_{q,n,t-1})  \right> \nonumber \\
& & \ \ \ \ \ \ \ \ \ \ = - \sum_{i=1}^K \mathbb{E}\left< \nabla F^{(i)}(\theta_q) , \   \gamma T\nabla F^{(i)}(\theta_q) \right> \label{m_2} \\
& & \ \ \ \ \ \ \ \ \ \  \ \ \  -\sum_{i=1}^K \mathbb{E}\left< \nabla F^{(i)}(\theta_q) , \   \frac{1}{\Gamma_q^{(i)}} \sum_{n\in\mathcal{N}_q^{(i)}} \sum_{t=1}^T \gamma \left[ \nabla F_n^{(i)} (\theta_{q,n,t-1}) - \nabla F^{(i)}(\theta_q)\right] \right> \nonumber
\end{eqnarray}
where we use the first step reformulates the inner product as a sum, the second step follows from Eq.(\ref{m_1}), the third step employs a conditional expectation over the random samples with respect to $\theta_q$, and the last step splits the result into two parts with respect to a reference point $\gamma T \nabla F^{(i)}(\theta_q)$. 

For the first term on the right-hand-side of Eq.(\ref{m_2}), it is easy to see that 
\begin{eqnarray}
&  - \sum_{i=1}^K \mathbb{E}\left< \nabla F^{(i)}(\theta_q) , \   \gamma T \nabla F^{(i)}(\theta_q) \right> & = - \gamma T \sum_{i=1}^K \left\| \nabla F^{(i)}(\theta_q)  \right\|^2  \nonumber \\ 
& & = - \gamma T \left\| \nabla F(\theta_q)  \right\|^2, \label{m_3}
\end{eqnarray}
where we add up the norm over $K$ regions in the last step. For the second term on the right-hand-side of Eq.(\ref{m_2}), we use the inequality $<a,b>\le \frac{1}{2} \|a\|^2 +  \frac{1}{2} \|b\|^2 $ for any vectors $a,b$. Applying this inequality to the second term, we have
\begin{eqnarray}
& & -\sum_{i=1}^K \mathbb{E}\left< \nabla F^{(i)}(\theta_q) , \   \frac{1}{\Gamma_q^{(i)}} \sum_{n\in\mathcal{N}_q^{(i)}} \sum_{t=1}^T \gamma \left[ \nabla F_n^{(i)} (\theta_{q,n,t-1}) - \nabla F^{(i)}(\theta_q)\right] \right> \nonumber \\
& & \ \ \ \ \ = -\sum_{i=1}^K T\gamma \cdot \mathbb{E}\left< \nabla F^{(i)}(\theta_q) , \   \frac{1}{T\Gamma_q^{(i)}} \sum_{n\in\mathcal{N}_q^{(i)}} \sum_{t=1}^T  \left[ \nabla F_n^{(i)} (\theta_{q,n,t-1}) - \nabla F^{(i)}(\theta_q)\right] \right> \nonumber \\
& &  \ \ \ \ \ \le  \frac{T\gamma}{2} \sum_{i=1}^K  \mathbb{E} \left\| \nabla F^{(i)}(\theta_q) \right\|^2 + \frac{T\gamma}{2} \sum_{i=1}^K  \mathbb{E} \left\|  \frac{1}{T\Gamma_q^{(i)}} \sum_{n\in\mathcal{N}_q^{(i)}} \sum_{t=1}^T  \left[ \nabla F_n^{(i)} (\theta_{q,n,t-1}) - \nabla F^{(i)}(\theta_q)\right] \right\| \nonumber \\
& & \ \ \ \ \ = \frac{T\gamma}{2} \mathbb{E} \left\| \nabla F(\theta_q) \right\|^2 + \frac{T\gamma}{2} \left( \frac{L^2 \gamma^2 T NG}{\Gamma^*} + \frac{L^2\delta^2  N}{\Gamma^*}  \mathbb{E} \|\theta_{q} \|^2\right) \label{m_4}
\end{eqnarray}
where the second step uses the inequality and the third step follows directly from Lemma~2. Plugging Eq.(\ref{m_3}) and Eq.(\ref{m_4}) results into Eq.(\ref{m_2}), we obtain the desired upperbound:
\begin{eqnarray}
 \mathbb{E}\left< \nabla F(\theta_q) , \ \theta_{q+1} - \theta_{q} \right>  \le - \frac{T\gamma}{2} \mathbb{E} \left\| \nabla F(\theta_q) \right\|^2 + \frac{T\gamma}{2} \left( \frac{L^2 \gamma^2 T NG}{\Gamma^*} + \frac{L^2\delta^2  N}{\Gamma^*}  \mathbb{E} \|\theta_{q} \|^2\right). \label{mm_1}
\end{eqnarray}

\vspace{0.07in}
\noindent {\bf Upperbound for $\frac{L}{2 } \mathbb{E}\left\| \theta_{q+1} - \theta_{q} \right\|^2$}. We use the again result in Eq.(\ref{m_1}) and apply it to $\theta_{q+1} - \theta_{q}$, which gives:
\begin{eqnarray}
 & & \frac{L}{2 } \mathbb{E}\left\| \theta_{q+1} - \theta_{q} \right\|^2  \nonumber  \\ 
 & &  \ \ \ \ = \frac{L}{2 } \mathbb{E}\left\| \frac{1}{\Gamma_q^{(i)}} \sum_{n\in\mathcal{N}_q^{(i)}} \sum_{t=1}^T \gamma \nabla F_n^{(i)} (\theta_{q,n,t-1},\xi_{n,t-1})\right\|^2  \nonumber \\
& & \ \ \ \ \le \frac{3L}{2 } \mathbb{E}\left\| \frac{1}{\Gamma_q^{(i)}} \sum_{n\in\mathcal{N}_q^{(i)}} \sum_{t=1}^T \gamma  \left[ \nabla F_n^{(i)} (\theta_{q,n,t-1},\xi_{n,t-1}) - \nabla F_n^{(i)} (\theta_{q,n,t-1})\right]\right\|^2 \nonumber \\
& & \ \ \ \ \ \ \ \ + \frac{3L}{2} \mathbb{E}\left\| \frac{1}{\Gamma_q^{(i)}} \sum_{n\in\mathcal{N}_q^{(i)}} \sum_{t=1}^T \gamma  \left[ \nabla F_n^{(i)} (\theta_{q,n,t-1}) - \nabla F_n^{(i)} (\theta_{q})  \right] \right\|^2 \nonumber \\
& & \ \ \ \ \ \ \ \ + \frac{3L}{2} \mathbb{E}\left\| \frac{1}{\Gamma_q^{(i)}} \sum_{n\in\mathcal{N}_q^{(i)}} \sum_{t=1}^T \gamma \nabla F_n^{(i)} (\theta_{q})   \right\|^2, \label{mm_2}
\end{eqnarray}
where in the second step, we use the inequality $\|\sum_{i=1}^s a_i \|^2\le s \sum_{i=1}^s \|a_i \|^2 $ and split stochastic gradient $[\nabla F_n^{(i)} (\theta_{q,n,t-1},\xi_{n,t-1})]$ into $s=3$ parts, i.e., $[\nabla F_n^{(i)} (\theta_{q,n,t-1},\xi_{n,t-1}) - \nabla F_n^{(i)} (\theta_{q,n,t-1})]$, $[F_n^{(i)} (\theta_{q,n,t-1}) - F_n^{(i)} (\theta_{q})]$, and $[F_n^{(i)}(\theta_{q})]$. 

Next, we notice that the third term on the right-hand-side of Eq.(\ref{mm_2}) can be simplified, because (i) for IID data distribution, the cost function of each worker $n$ is the same as the global cost function, i.e., $\nabla F_n(\theta_q) = \nabla F(\theta_q)$, and (ii) for non-IID data distribution, the gradient noise assumption (Assumption~5) implies that $\frac{1}{\Gamma_q^{(i)}} \sum_{n\in\mathcal{N}_q^{(i)}} \nabla F_n (\theta_{q}) = F(\theta_{q})$. Thus in both cases, we have:
\begin{eqnarray}
& \frac{3L}{2} \mathbb{E}\left\| \frac{1}{\Gamma_q^{(i)}} \sum_{n\in\mathcal{N}_q^{(i)}} \sum_{t=1}^T \gamma \nabla F_n^{(i)} (\theta_{q})   \right\|^2 & \le \frac{3LT^2\gamma^2}{2}\sum_{i=1}^K \mathbb{E}\|  \nabla F^{(i)} (\theta_{q})  \|^2  \nonumber \\
&  & = \frac{3LT^2\gamma^2}{2} \mathbb{E}\| F (\theta_{q})  \|^2 , \label{mm_3}
\end{eqnarray}
where we again used the sum of norm of $K$ regions in the last step.

Now we notice that the first and second terms of Eq.(\ref{mm_2}) have been bounded by Lemma~2 and Lemma~3, excpet for constants $\gamma$ and ${1}/{T}$. Applying these results directly and also plugging in Eq.(\ref{mm_3}) into Eq.(\ref{mm_2}), we obtain the desired upperbound:
\begin{eqnarray}
 & \frac{L}{2 } \mathbb{E}\left\| \theta_{q+1} - \theta_{q} \right\|^2 & \le \frac{3LTN\gamma^2\sigma^2}{2({\Gamma^*})^2} {\rm \ (for \ IID) \ or \ }  \frac{3LTK\gamma^2\sigma^2}{2} {\rm \ (for \ non-IID)} \nonumber  \\ 
 & & \ \ \ \ + \frac{3L^3 \gamma^4 T^3 NG}{2\Gamma^*} + \frac{3L^3T^2\gamma^2\delta^2  N}{2\Gamma^*} \mathbb{E} \|\theta_{q} \|^2 \nonumber \\
 & & \ \ \ \ + \frac{3LT^2\gamma^2}{2} \mathbb{E}\| F_n (\theta_{q})  \|^2. \label{mm_4}
\end{eqnarray}

\vspace{0.07in}
\noindent {\bf Combining the two Upperbounds}. Finally, we will apply the upperbound for $\mathbb{E}\left< \nabla F(\theta_q) , \ \theta_{q+1} - \theta_{q} \right>$ in Eq.(\ref{mm_1}) as well as the upperbound for $\frac{L}{2} \mathbb{E}\left\| \theta_{q+1} - \theta_{q} \right\|^2$ in Eq.(\ref{mm_4}), and plug them into Eq.(\ref{mm_0}). First we take the sum over $q=1,\ldots,Q$ on both sides of Eq.(\ref{mm_0}), which becomes:
\begin{eqnarray}
 & & \mathbb{E} [ F(\theta_{Q+1})] - \mathbb{E} [ F(\theta_{0})] \nonumber \\
& &  \ \ \ \ \ \ \ \  = \sum_{q=1}^Q \mathbb{E} [ F(\theta_{q+1})] - \sum_{q=1}^Q \mathbb{E}[F(\theta_q)] \nonumber \\
 & &   \ \ \ \ \ \ \ \ \le \sum_{q=1}^Q \mathbb{E}\left< \nabla F(\theta_q) , \ \theta_{q+1} - \theta_{q} \right> + \sum_{q=1}^Q  \frac{L}{2 } \mathbb{E}\left\| \theta_{q+1} - \theta_{q} \right\|^2. \label{mm_5}
\end{eqnarray}
Now plugging in the two upperbounds and re-arranging the terms, for IID data distribution, we derive:
\begin{eqnarray}
 & & \mathbb{E} [ F(\theta_{Q+1})] - \mathbb{E} [ F(\theta_{0})] \nonumber \\
& &  \ \ \ \ \ \ \ \  \le - \frac{T\gamma}{2} \left( 1-3LT\gamma \right) \sum_{q=1}^Q \mathbb{E}\| \nabla F(\theta_q)\|^2  \nonumber \\ 
& & \ \ \ \ \ \ \ \  \ \ \ \ + \frac{\gamma TQ}{2} \left( \frac{TL^2\gamma^2 NG}{\Gamma^*} +\frac{3LN\gamma \sigma^2}{(\Gamma^*)^2} + \frac{3L^3\gamma^3T^3NG}{\Gamma^*} \right)   \nonumber \\
& & \ \ \ \ \ \ \ \  \ \ \ \ +\frac{T\gamma}{2} \left( \frac{L^2\delta^2 N}{\Gamma^*} +\frac{3L^3T\gamma \delta^2N}{\Gamma^*}  \right) \sum_{q=1}^T \mathbb{E} \| \theta_q \|^2.
\end{eqnarray}
We choose learning rate $\gamma\le 1/(6LT)$ and use the fact that $\mathbb{E} [ F(\theta_{Q+1})]$ is non-negative. The inequality above becomes: 
\begin{eqnarray}
& \frac{T\gamma}{4} \sum_{q=1}^Q \mathbb{E}\| \nabla F(\theta_q)\|^2 & \le \mathbb{E} [ F(\theta_{0})] + \frac{T\gamma Q}{2}\left( \frac{3LN\gamma \sigma^2}{(\Gamma^*)^2} +  \frac{3L^2\gamma^2TNG}{2\Gamma^*}\right) \nonumber \\
& & \ \ \ \ + \frac{T\gamma}{2}\left( \frac{3L^2\delta^2 N}{2\Gamma^*} \right)\sum_{q=1}^T \mathbb{E}  \| \theta_q \|^2.
\end{eqnarray}
Dividing both sides above by $4/(QT\gamma)$ and choosing $\gamma = 1/\sqrt{TQ}$, we have:
\begin{eqnarray}
& \frac{1}{Q} \sum_{q=1}^Q \mathbb{E}\| \nabla F(\theta_q)\|^2 & \le \frac{ 4\mathbb{E} [ F(\theta_{0})]}{\sqrt{TQ}} +  \frac{6LN \sigma^2}{\sqrt{TQ}(\Gamma^*)^2} +  \frac{3L^2NG}{Q\Gamma^*} \nonumber \\
& & \ \ \ \ + \frac{3L^2\delta^2 N}{\Gamma^*} \cdot \frac{1}{Q}\sum_{q=1}^T \mathbb{E}  \| \theta_q \|^2 \nonumber \\
& & = \frac{G_0}{\sqrt{TQ}} +  \frac{V_0}{\sqrt{Q}} + \frac{I_0}{\Gamma^*} \cdot \frac{1}{Q}\sum_{q=1}^T \mathbb{E}  \| \theta_q \|^2,
\end{eqnarray}
where we introduce constants $G_0=4\mathbb{E} [ F(\theta_{0})]+6LN \sigma^2/(\Gamma^*)^2$, $V_0=3L^2NG/\Gamma^*$, and $I_0=3L^2\delta^2 N$. This completes the proof of Theorem~1.

Finally, for non-IID data distribution, we plug the two upperbounds into Eq.(\ref{mm_5}) and re-arrange the terms. We follow a similar procedure and choose learning rate $\gamma=1/\sqrt{TQ}$ and $\gamma \le 1/(6LT)$. It is straightforward to show that for non-IID data distribution:
\begin{eqnarray}
\frac{1}{Q} \sum_{q=1}^Q \mathbb{E}\| \nabla F(\theta_q)\|^2 \le \frac{H_0}{\sqrt{TQ}} +  \frac{V_0}{\sqrt{Q}} + \frac{I_0}{\Gamma^*} \cdot \frac{1}{Q}\sum_{q=1}^T \mathbb{E}  \| \theta_q \|^2,
\end{eqnarray}
where $H_1=4\mathbb{E} [ F(\theta_{0})]+6LK\sigma^2$ is a different constant. This completes the proof of Theorem~2.

\section{Experimental Details}

\subsection{Experiment Setup}
The code implementation is open sourced and can be found at

\noindent\href{https://github.com/hanhanAnderson/FL_Converge}{\text{https://github.com/hanhanAnderson/FL\_Converge}}.

In this experimental section we evaluate different pruning techniques from state-of-the-art designs and verify our proposed theory under unifying pruning framework using two datasets.

Unless stated otherwise, the accuracy reported is defined as $$\frac{1}{n} \sum _{i} p_{i}\sum_{j}\text{Acc}(f_{i}(x_{j}^{(i)},\theta_{i}\odot m_{i}),y_{j}^{i}))$$ averaged over three random seeds with same random initialized starting $\theta_{0}$. Some key hyper-parameters includes total training rounds $Q = 100$,  local training epochs $T = 5$, testing batch size $bs = 128$ and local batch size $bl = 10$. Momentum for SGD is set to 0.5. standard batch normalization is used.

We focus on three points in our experiments: (i) the general coverage of federated learning with heterogeneous models by pruning (ii) the impact of coverage index $\Gamma_{min}$ (iii) the impact of mask error $\delta$.

We examine theoretical results on the following two common image classification datasets: MNIST and CIFAR10, among $N=100$ workers with IID and non-IID data with participation ratio $c = 0.1$. For IID data, we follow the design of balanced MNIST by previous research, and similarly obtain balanced CIFAR10. For non-IID data, we obtained balanced partition with label distribution skewed, where the number of the samples on each device is up to at most two out of ten possible classifications. 

\subsection{Pruning Techniques}
In the paper we select 4 pruning techniques as baselines and we elaborate the details of them.
Let $P_{m}=\frac{\|m\|_{0}}{|\theta|}$ be the sparsity of mask $m$, e.g.,$P_{m}=75\%$ for a model when 25 \% of its weights are pruned, and M is the number of the parameters in the model. 
Then a mask for weights pruning can be defined as:
\begin{equation}
    m_{i} = \begin{cases}
1 & \text{, if } \mathit{argsort}(\theta[i]) < P_{m} * M\\
0 & \text{, otherwise }
\end{cases} , i \in M
\end{equation}
Similarly we have the defination for neuron pruning:
\begin{equation}
m_{i} = \begin{cases}
1 & \text{, if } \mathit{argsort}(\sum \theta_{i}) < P_{m} * N\\
0 & \text{, otherwise }
\end{cases} , \theta_{i} \in \textbf{Neuron}\ i
\end{equation}

where N is the total number of neurons in the network, and fixed subnetwork:
\begin{equation}
  m_{i} = \begin{cases}
1 & \text{, if } i < P_{m} * M\\
0 & \text{, otherwise }
\end{cases} , i \in M  
\end{equation}
where M is the total number of parameters in the network.

Note in adaptive pruning such mask is subject to change after each round of global aggregation.
For pruning with pre-trained mask, the mask is generated based on eq(x) for first 3 rounds then fixed for the rest of the training.

An illustration of those pruning techniques can be found in figure.

\begin{figure}[h]
    \centering
    \includegraphics[width=0.4\textwidth]{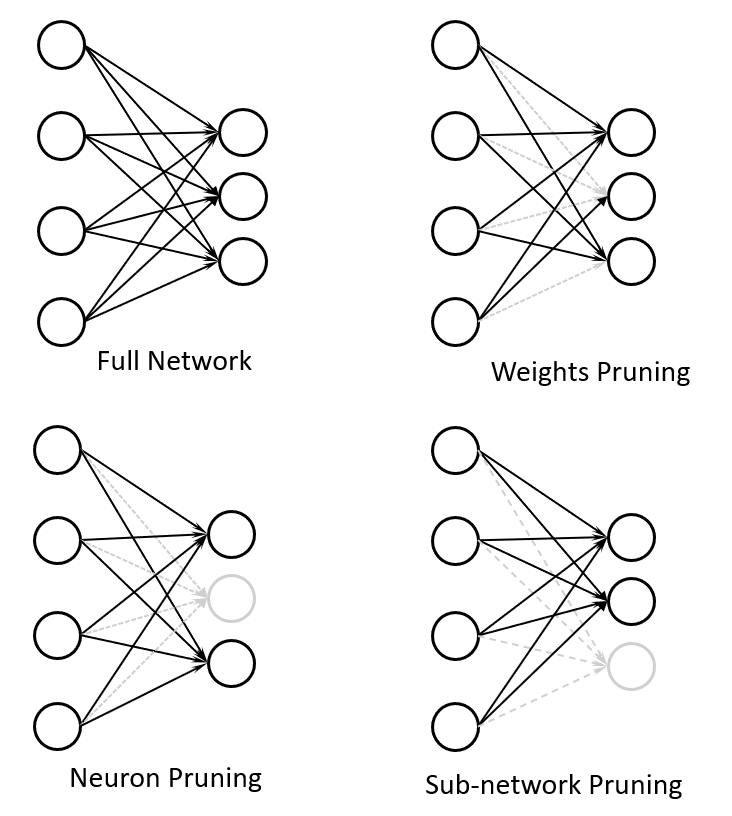}
    \caption{Illustration of pruning techniques used in this paper}
    \label{fig:my_label}
\end{figure}

\section{More Results on MNIST dataset}

In this section we present more supplementary experimental results on MNIST dataset. Specifically, we present the training progress in respect of global loss and accuracy for selected pruning techniques. For the final training results we focus on WP, FS and NP as PT is not found competitive without a carefully designed algorithm, however we still keep the training details for PT.
\subsection{Change of Notations}
In the main paper we use code name for simplicity of notation and better understanding. Here we present the results with their detailed settings.

For a full model without pruning it can be described as 
$\mathbb{P}_{1} (\theta)=  \{\textsl{S}_{1},\textsl{S}_{2},\textsl{S}_{3},\textsl{S}_{4}\}$, where
$$m_{i} = 1 \ \text{if} \ \theta_{i}\in \{\textsl{S}_{1}\cup \textsl{S}_{2}\cup \textsl{S}_{3}\cup \textsl{S}_{4}\} \ \text{ otherwise} \ m_{i} = 0$$.

Similarly we have another 3 pruning polices as follows:
$$\mathbb{P}_{2} (\theta)=  \{\textsl{S}_{1},\textsl{S}_{3},\textsl{S}_{4}\}$$
$$\mathbb{P}_{3} (\theta)=  \{\textsl{S}_{1},\textsl{S}_{2},\textsl{S}_{4}\}$$
$$\mathbb{P}_{4} (\theta)=  \{\textsl{S}_{1},\textsl{S}_{2},\textsl{S}_{3}\}$$

And we further denote a local client with its pruning policy, as an example, the case "*WP-M1" uses 4 local clients with full models, 2 local clients with pruned models using pruning policy  $\mathbb{P}_{4}$, 2 local clients with pruned models using pruning policy $\mathbb{P}_{2}$ and  2 local clients with pruned models using pruning policy $\mathbb{P}_{3}$, then we denote its code name as "1111223344" for simpler notation. Note that we continue to use code name "FedAvg" as a baseline rather than "1111111111". For the rest of the appendix we continue using such notations for denoting its pruning policy settings.

\begin{table}[h]
\centering
\resizebox{\textwidth}{!}{%
\begin{tabular}{@{}cccccccccccc@{}}
\toprule
\multirow{2}{*}{codename} &
  \multirow{2}{*}{1} &
  \multirow{2}{*}{0.75} &
  \multirow{2}{*}{0.5} &
  \multirow{2}{*}{PARAs} &
  \multirow{2}{*}{FLOPs} &
  \multirow{2}{*}{$\Gamma_{min}$} &
  \multirow{2}{*}{\%PARA} &
  \multirow{2}{*}{\%FLOPS} &
  IID &
  \multicolumn{2}{c}{Non-IID} \\ \cmidrule(l){10-12} 
           &    &   &   &         &         &    &          &          & Accuracy & Global & Local \\ \midrule
1111111111 & 10 &   &   & 159010 & 158800 & 10 & 1.00        & 1.00        & 98.045   & 93.59  & 93.82 \\
1111114444 & 6  & 4 &   & 143330 & 143120 & 6  & 0.90  & 0.90 & 98.18    & 95.15  & 95.49 \\
1111144447 & 5  & 4 & 1 & 135490 & 135280 & 5  & 0.85 & 0.85 & 97.51    & 89.13  & 89.29 \\
1111223344 & 4  & 6 &   & 135490 & 135280 & 8  & 0.85 & 0.85 & 98.325   & 95.48  & 95.82 \\
1111234444 & 4  & 6 &   & 135490 & 135280 & 6  & 0.85 & 0.85 & 98.395   & 95.45  & 95.96 \\
1111234567 & 4  & 3 & 3 & 123730 & 123520 & 7  & 0.77 & 0.77 & 96.735   & 88.99  & 88.9  \\
1111444444 & 4  & 6 &   & 135490 & 135280 & 4  & 0.85 & 0.85 & 97.85    & 89.13  & 89.29 \\
1111444477 & 4  & 4 & 2 & 127650 & 127440 & 4  & 0.80  & 0.80 & 96.99    & 93.02  & 93.12 \\
1111556677 & 4  &   & 6 & 111970 & 111760 & 6  & 0.70  & 0.70 & 95.545   & 80.07  & 79.34 \\
1114556677 & 3  & 1 & 6 & 108050 & 107840 & 5  & 0.67 & 0.67 & 95.80     & 79.3   & 79.75 \\
1234556677 & 1  & 3 & 6 & 100210 & 100000 & 5  & 0.63 & 0.62 & 95.315   & 81.66  & 81.64 \\
1455666777 & 1  & 1 & 8 & 92370  & 92160  & 3  & 0.58 & 0.58 & 94.795   & 79.15  & 79.08 \\
2233445677 & 0  & 6 & 4 & 104130 & 103920 & 5  & 0.65 & 0.65 & 95.955   & 81.27  & 81.17 \\
1444777777 & 1  & 3 & 6 & 92370  & 92160  & 6  & 0.65 & 0.65 & 95.10     & 72.19  & 71.64 \\ \bottomrule
\end{tabular}%
}
\caption{Results For Weights Pruning on MNIST}
\label{tab:my-table}
\end{table}

\begin{table}[h]
\centering
\resizebox{\textwidth}{!}{%
\begin{tabular}{@{}cccccccccccc@{}}
\toprule
\multirow{2}{*}{codename} &
  \multirow{2}{*}{100\%} &
  \multirow{2}{*}{75\%} &
  \multirow{2}{*}{50\%} &
  \multirow{2}{*}{PARAs} &
  \multirow{2}{*}{FLOPs} &
  \multirow{2}{*}{$\Gamma_{min}$} &
  \multirow{2}{*}{\%PARA} &
  \multirow{2}{*}{\%FLOPS} &
  IID &
  \multicolumn{2}{c}{Non-IID} \\ \cmidrule(l){10-12} 
           &    &   &   &         &         &    &          &       & Accuracy & Global & Local \\ \midrule
1111111111 & 10 &   &   & 159010 & 158800 & 10 & 1.00        & 1.00     & 98.13    & 95.31  & 95.33 \\
1111114444 & 6  & 4 &   & 143110 & 142920 & 6  & 0.90 & 0.90   & 97.97    & 93.6   & 93.82 \\
1111144447 & 5  & 4 & 1 & 135160 & 134980 & 5  & 0.85 & 0.85  & 97.395   & 91.92  & 92.18 \\
1111223344 & 4  & 6 &   & 135160 & 134980 & 8  & 0.85 & 0.85  & 97.865   & 91.9   & 92.42 \\
1111234444 & 4  & 6 &   & 135160 & 134980 & 6  & 0.85 & 0.85  & 97.86    & 92.99  & 92.93 \\
1111234567 & 4  & 3 & 3 & 123235 & 123070 & 7  & 0.77 & 0.775 & 96.645   & 83.82  & 83.61 \\
1111444444 & 4  & 6 &   & 135160 & 134980 & 4  & 0.85 & 0.85  & 97.53    & 91.8   & 92.07 \\
1111444477 & 4  & 4 & 2 & 127210 & 127040 & 4  & 0.80 & 0.80   & 96.775   & 84.91  & 85.02 \\
1111556677 & 4  &   & 6 & 111310 & 111160 & 6  & 0.70 & 0.70   & 96.575   & 69.11  & 69.63 \\
1114556677 & 3  & 1 & 6 & 107335 & 107190 & 5  & 0.67 & 0.675 & 95.345   & 77.53  & 77.7  \\
1234556677 & 1  & 3 & 6 & 99385  & 99250  & 5  & 0.62 & 0.625 & 95.475   & 72.8   & 72.4  \\
1455666777 & 1  & 1 & 8 & 91435  & 91310  & 3  & 0.57 & 0.575 & 94.41    & 61.96  & 62.49 \\
2233445677 & 0  & 6 & 4 & 103360 & 103220 & 5  & 0.65 & 0.65  & 96.375   & 60.23  & 61.01 \\
1444777777 & 1  & 3 & 6 & 99385  & 99250  & 5  & 0.62 & 0.625 & 95.23    & 60.54  & 61.85 \\ \midrule
           &    &   &   &         &         &    &          &       &          &        &      
\end{tabular}%
}
\caption{Results For Neuron Pruning on MNIST}
\label{tab:my-table}
\end{table}

\begin{table}[h]
\centering
\resizebox{\textwidth}{!}{%
\begin{tabular}{@{}cccccccccccc@{}}
\toprule
\multirow{2}{*}{codename} &
  \multirow{2}{*}{100\%} &
  \multirow{2}{*}{75\%} &
  \multirow{2}{*}{50\%} &
  \multirow{2}{*}{PARAs} &
  \multirow{2}{*}{FLOPs} &
  \multirow{2}{*}{$\Gamma_{min}$} &
  \multirow{2}{*}{\%PARA} &
  \multirow{2}{*}{\%FLOPS} &
  IID &
  \multicolumn{2}{c}{Non-IID} \\ \cmidrule(l){10-12} 
           &    &   &   &         &         &    &          &       & Accuracy & Global & Local \\ \midrule
1111111111 & 10 &   &   & 159010 & 158800 & 10 & 1.00        & 1.00     & 97.67    & 94.12  & 94.45 \\
1111114444 & 6  & 4 &   & 143110 & 142920 & 6  & 0.9      & 0.90  & 97.76    & 92.33  & 92.55 \\
1111144447 & 5  & 4 & 1 & 135160 & 134980 & 6  & 0.85     & 0.85  & 97.34    & 93.79  & 93.92 \\
1111444444 & 4  & 6 &   & 135160 & 134980 & 4  & 0.85     & 0.85  & 97.62    & 92.05  & 92.33 \\
1111444477 & 4  & 4 & 2 & 127210 & 127040 & 4  & 0.80 & 0.8   & 97.32    & 92.67  & 92.95 \\
1111444777 & 4  & 3 & 3 & 123235 & 123070 & 4  & 0.77 & 0.775 & 97.35    & 91.34  & 91.73 \\
1111777777 & 4  &   & 6 & 111310 & 111160 & 4  & 0.70 & 0.7   & 97.18    & 93.6   & 93.48 \\
1114777777 & 3  & 1 & 6 & 107335 & 107190 & 3  & 0.67  & 0.675 & 97.12    & 93.7   & 93.57 \\
1444777777 & 1  & 3 & 6 & 99385  & 99250  & 1  & 0.62 & 0.625 & 97.01    & 90.74  & 90.57 \\
1477777777 & 1  & 1 & 8 & 91435  & 91310  & 1  & 0.57 & 0.575 & 96.88    & 90.73  & 90.67 \\ \bottomrule
\end{tabular}%
}
\caption{Results For Fixed Sub-network on MNIST}
\label{tab:my-table}
\end{table}

\subsection{More Results}

\subsubsection{Case for IID data}
We present the full results of training  for IID case in Fig 2 - 5
\\

\begin{figure}[h]
     \centering
     \begin{subfigure}[b]{0.49\textwidth}
         \centering
         \includegraphics[width=\textwidth]{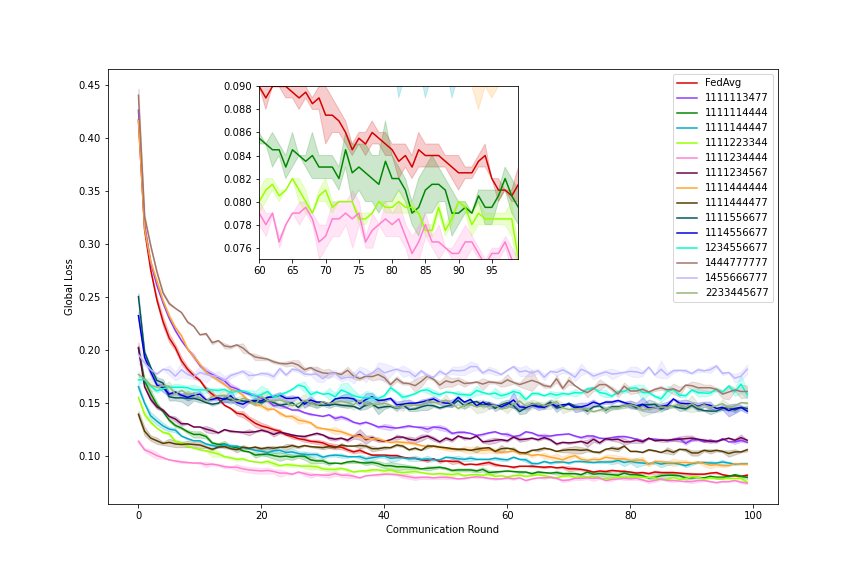}
         \caption{Global Loss}
         \label{fig:1}
     \end{subfigure}
     \hfill
     \begin{subfigure}[b]{0.49\textwidth}
         \centering
         \includegraphics[width=\textwidth]{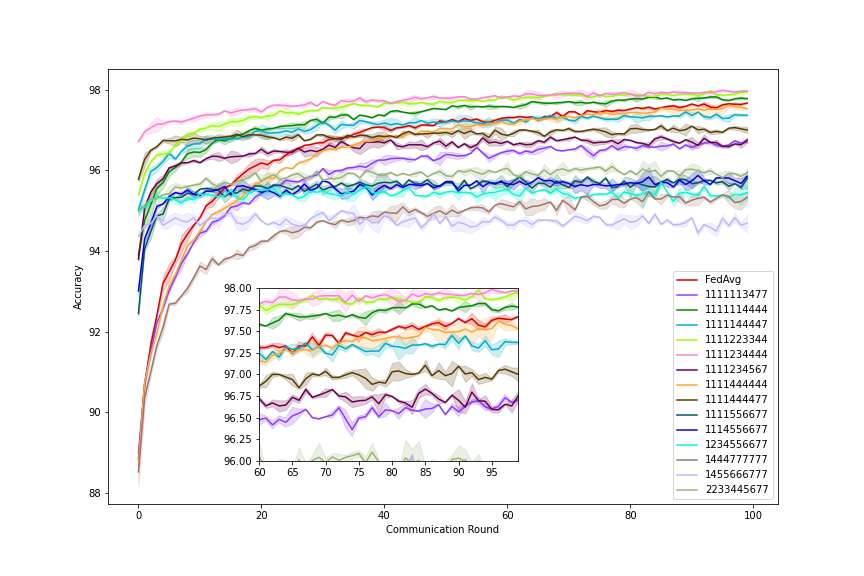}
         \caption{Accuracy}
         \label{fig:three sin x}
     \end{subfigure}
        \caption{Results on Weights Pruning on MNIST IID}
        \label{fig:1}
\end{figure}

\begin{figure}[h]
     \centering
     \begin{subfigure}[b]{0.49\textwidth}
         \centering
         \includegraphics[width=\textwidth]{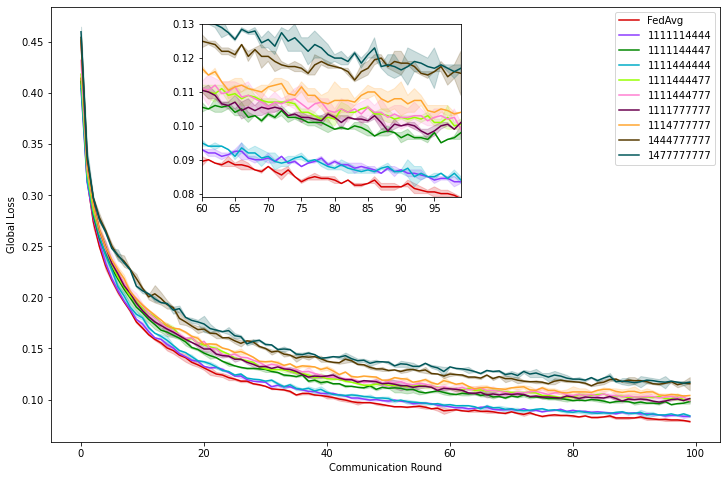}
         \caption{Global Loss}
         \label{fig:1}
     \end{subfigure}
     \hfill
     \begin{subfigure}[b]{0.49\textwidth}
         \centering
         \includegraphics[width=\textwidth]{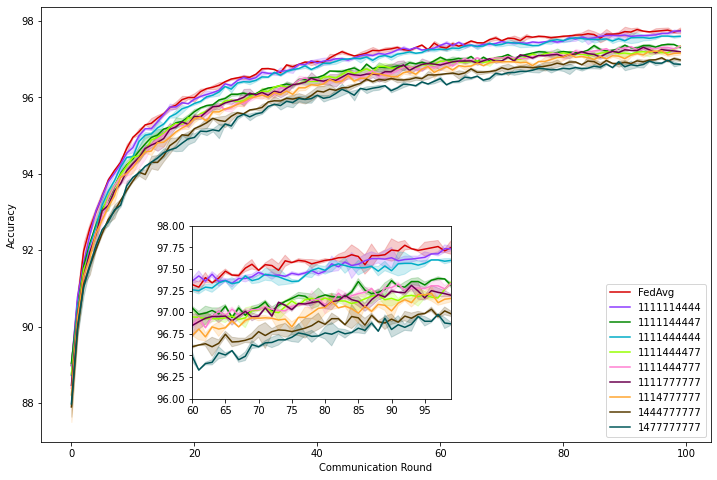}
         \caption{Accuracy}
         \label{fig:three sin x}
     \end{subfigure}
        \caption{Results on Fixed Sub-network on MNIST IID}
        \label{fig:1}
\end{figure}

\begin{figure}[h]
     \centering
     \begin{subfigure}[b]{0.49\textwidth}
         \centering
         \includegraphics[width=\textwidth]{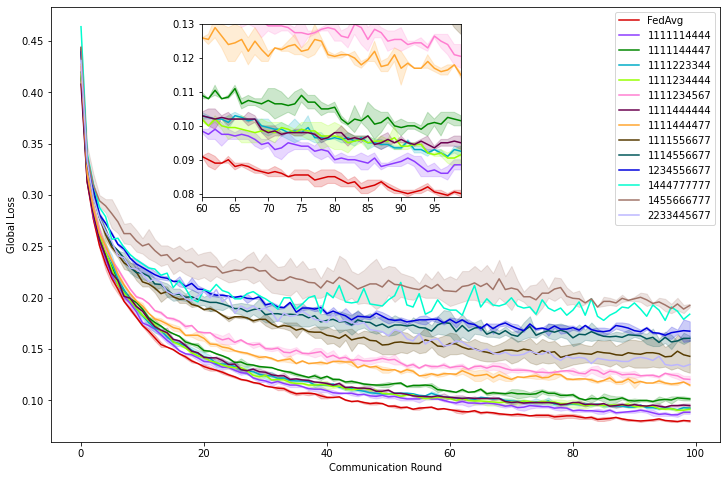}
         \caption{Global Loss}
         \label{fig:1}
     \end{subfigure}
     \hfill
     \begin{subfigure}[b]{0.49\textwidth}
         \centering
         \includegraphics[width=\textwidth]{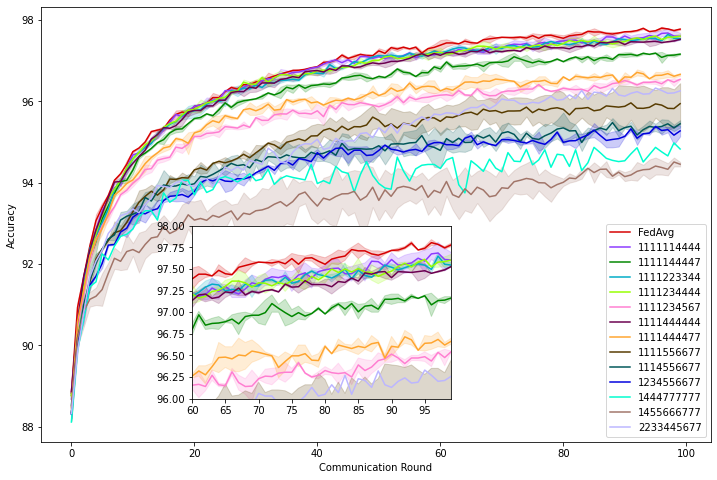}
         \caption{Accuracy}
         \label{fig:three sin x}
     \end{subfigure}
        \caption{Results on Neuron Pruning on MNIST IID}
        \label{fig:1}
\end{figure}

\begin{figure}[h]
     \centering
     \begin{subfigure}[b]{0.49\textwidth}
         \centering
         \includegraphics[width=\textwidth]{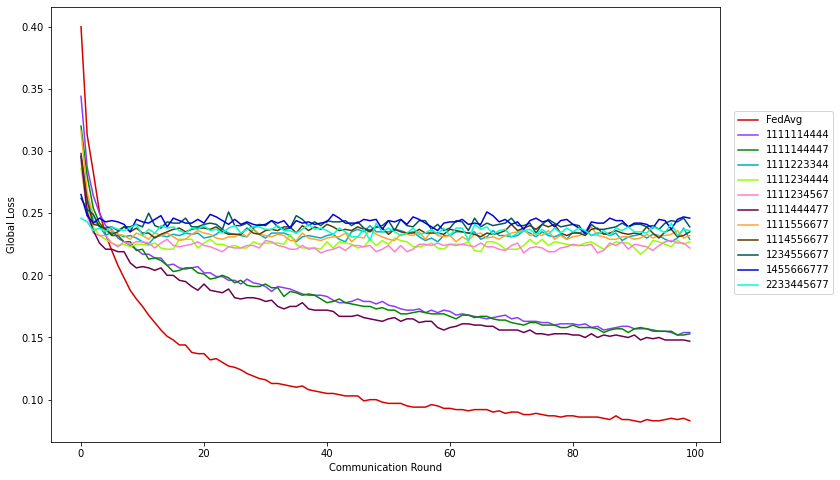}
         \caption{Global Loss}
         \label{fig:1}
     \end{subfigure}
     \hfill
     \begin{subfigure}[b]{0.49\textwidth}
         \centering
         \includegraphics[width=\textwidth]{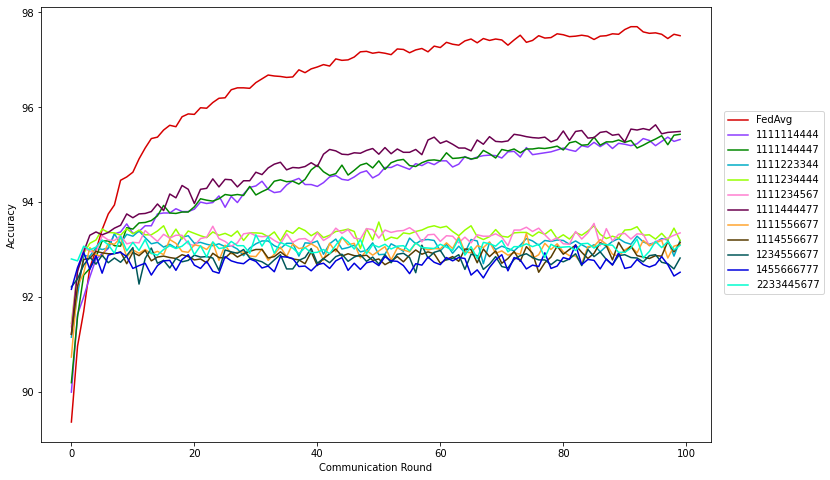}
         \caption{Accuracy}
         \label{fig:three sin x}
     \end{subfigure}
        \caption{Results on Pruning with pre-trained mask on MNIST IID}
        \label{fig:1}
\end{figure}





\subsubsection{Case for non-IID data}

We present the full results of training  for non-IID case in Fig 6 - 9

\begin{figure}[h]
     \centering
     \begin{subfigure}[b]{0.49\textwidth}
         \centering
         \includegraphics[width=\textwidth]{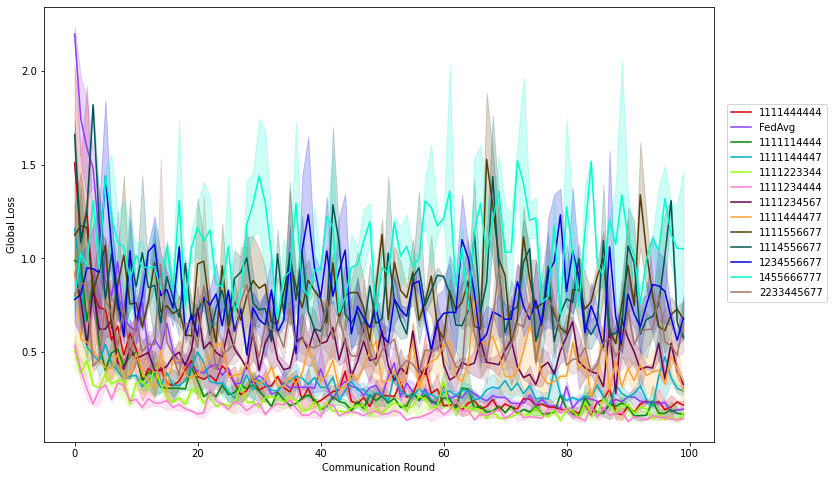}
         \caption{Global Loss}
         \label{fig:1}
     \end{subfigure}
     \hfill
     \begin{subfigure}[b]{0.49\textwidth}
         \centering
         \includegraphics[width=\textwidth]{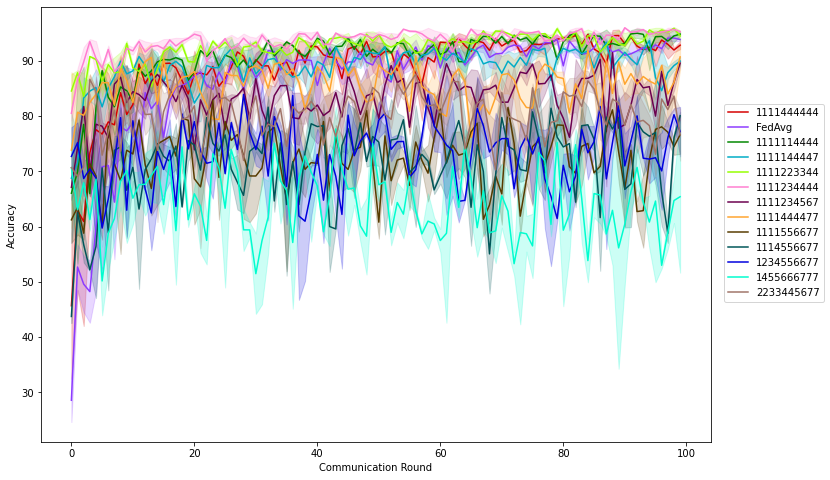}
         \caption{Accuracy}
         \label{fig:three sin x}
     \end{subfigure}
        \caption{Results on Weights Pruning on MNIST non-IID}
        \label{fig:1}
\end{figure}

\begin{figure}[h]
     \centering
     \begin{subfigure}[b]{0.49\textwidth}
         \centering
         \includegraphics[width=\textwidth]{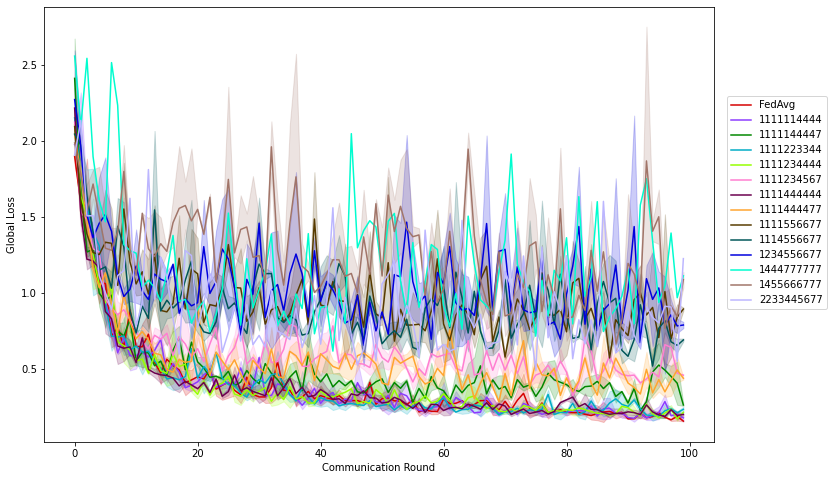}
         \caption{Global Loss}
         \label{fig:1}
     \end{subfigure}
     \hfill
     \begin{subfigure}[b]{0.49\textwidth}
         \centering
         \includegraphics[width=\textwidth]{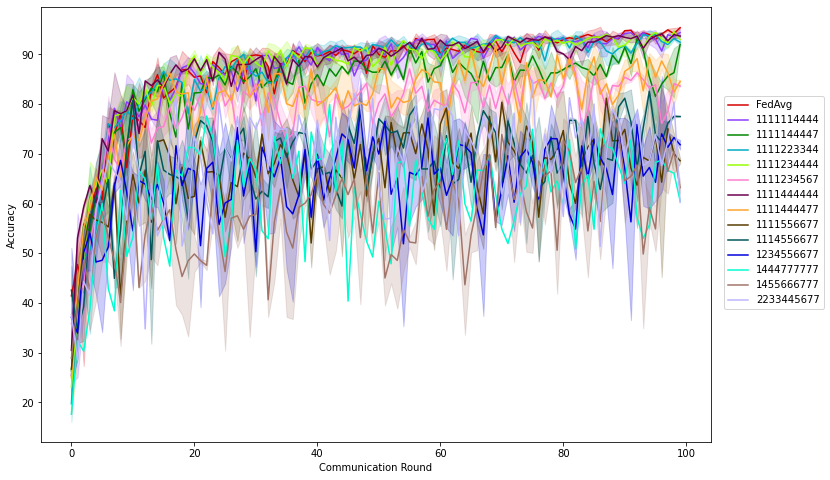}
         \caption{Accuracy}
         \label{fig:three sin x}
     \end{subfigure}
        \caption{Results on Neuron Pruning on MNIST non-IID}
        \label{fig:1}
\end{figure}

\begin{figure}[h]
     \centering
     \begin{subfigure}[b]{0.49\textwidth}
         \centering
         \includegraphics[width=\textwidth]{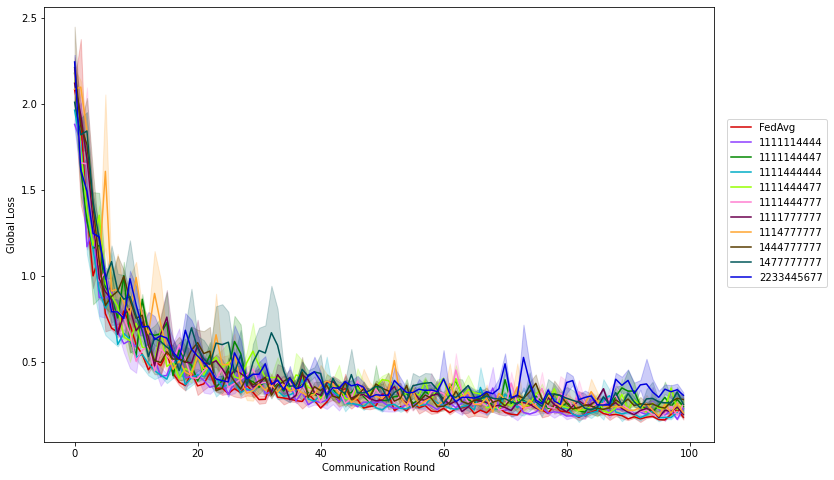}
         \caption{Global Loss}
         \label{fig:1}
     \end{subfigure}
     \hfill
     \begin{subfigure}[b]{0.49\textwidth}
         \centering
         \includegraphics[width=\textwidth]{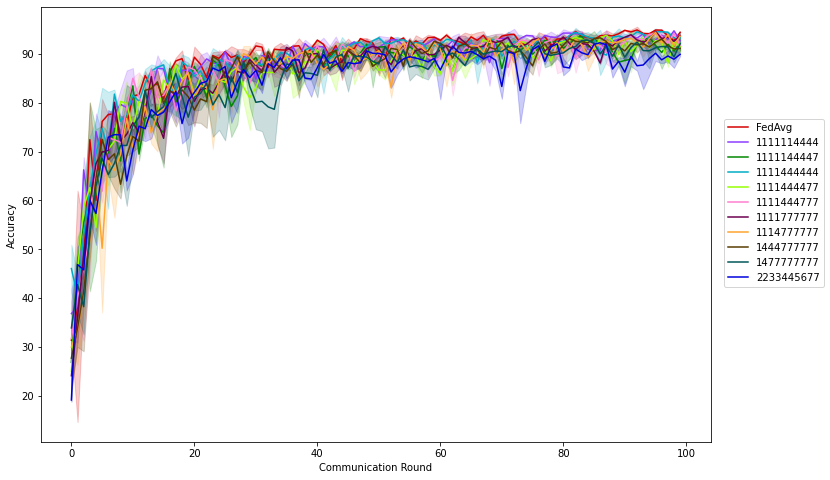}
         \caption{Accuracy}
         \label{fig:three sin x}
     \end{subfigure}
        \caption{Results on Fixed Sub-network on MNIST non-IID}
        \label{fig:1}
\end{figure}

\begin{figure}[h]
     \centering
     \begin{subfigure}[b]{0.49\textwidth}
         \centering
         \includegraphics[width=\textwidth]{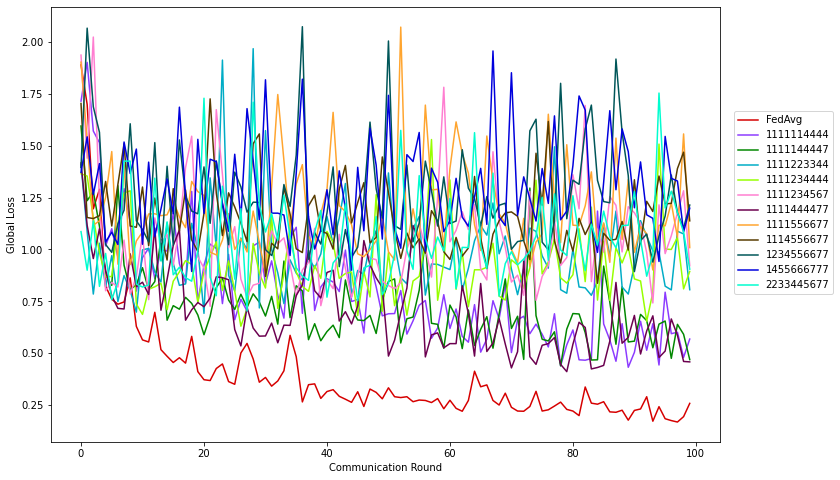}
         \caption{Global Loss}
         \label{fig:1}
     \end{subfigure}
     \hfill
     \begin{subfigure}[b]{0.49\textwidth}
         \centering
         \includegraphics[width=\textwidth]{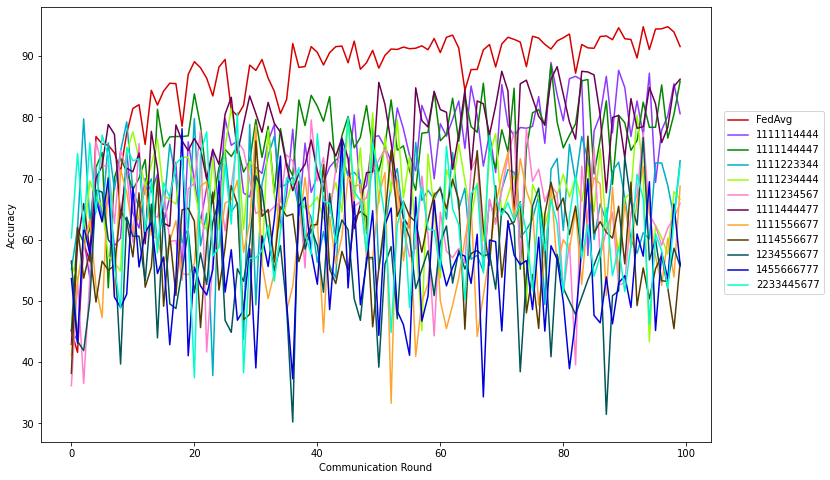}
         \caption{Accuracy}
         \label{fig:three sin x}
     \end{subfigure}
        \caption{Results on Pruning with pre-trained mask on MNIST non-IID}
        \label{fig:1}
\end{figure}





\newpage
\clearpage
\section{More Results For CIFAR-10-IID}

In this section we present more supplementary experimental results on CIFAR 10 dataset to test the effects of pruning on convolutional layers. Specifically, we present the training progress in respect of global loss and accuracy for selected pruning techniques where we focus on WP and FS. 

\subsection{Change of Notations}
In the main paper we use code name for simplicity of notation and better understanding. Here we present the results with their detailed settings.

For a full model without pruning it can be described as 
$\mathbb{P}_{1} (\theta)=  \{\textsl{S}_{1},\textsl{S}_{2},\textsl{S}_{3},\textsl{S}_{4}\}$, where
$$m_{i} = 1 \ \text{if} \ \theta_{i}\in \{\textsl{S}_{1}\cup \textsl{S}_{2}\cup \textsl{S}_{3}\cup \textsl{S}_{4}\} \ \text{ otherwise} \ m_{i} = 0$$.
As we have demonstrated the effects of pruning MLP layers, on CIFAR10 datasets we focus on the effects of conv2d layers.

We have another 3 pruning polices for conv2d layers as follows:
$$\mathbb{P}_{2} (\theta)=  \{\textsl{S}_{1},\textsl{S}_{3},\textsl{S}_{4}\}$$
$$\mathbb{P}_{3} (\theta)=  \{\textsl{S}_{1},\textsl{S}_{2},\textsl{S}_{4}\}$$
$$\mathbb{P}_{4} (\theta)=  \{\textsl{S}_{1},\textsl{S}_{2},\textsl{S}_{3}\}$$

For WP and PT, when using $\mathbb{P}_{2}$ the top 75\% of kernels will be kept,  i.e. for the first conv2d layer, the 5 largest kernels out of total of 6 kernels will be kept, and the 6-th kernel will be pruned. Under all pruning polices MLP layers will be pruned at 75\% accordingly. Note under such settings, code name without full model '1' , e.g. '2222333444', will not satisfy our necessary condition of convergence.

For FS, we denote $\mathbb{P}_{2}$ as the similar policy as above but only the first continuous parameters, i.e. for the first conv2d layer, the first 5 kernels out of total of 6 kernels will be kept, and the 6-th kernel will be pruned, together with pruning MLP layers at 75\%. We denote $\mathbb{P}_{3}$ as only pruning conv2d layers and $\mathbb{P}_{4}$ as only pruning MLP layers. In this case, note that even with same codename for WP and FS, their results are NOT directly comparable.

And we further denote a local client with its pruning policy, as an example, the case "*WP-M1" uses 4 local clients with full models, 2 local clients with pruned models using pruning policy  $\mathbb{P}_{4}$, 2 local clients with pruned models using pruning policy $\mathbb{P}_{2}$ and  2 local clients with pruned models using pruning policy $\mathbb{P}_{3}$, then we denote its code name as "1111223344" for simpler notation. Note that we continue to use code name "FedAvg" as a baseline rather than "1111111111". For the rest of the appendix we continue using such notations for denoting its pruning policy settings. For the final training results we focus on WP, FS and NP as PT is not found competitive without a carefully designed algorithm, however we still keep the training details for PT.

\begin{table}[]
\centering
\resizebox{\textwidth}{!}{%
\begin{tabular}{@{}llcccc@{}}
\toprule
Codename   & PARAs(K)        & \%   & FLOPs(K) & \%   & Testing Accuracy \\ \midrule
1111111111 & 512.80           & 1.00 & 653.8    & 1.00 & 53.63            \\
1111111122 & 482.34          & 0.94 & 619.6    & 0.94 & 53.12            \\
1111112222 & 451.936         & 0.88 & 587.0    & 0.89 & 52.66            \\
1111112223 & 451.936         & 0.88 & 587.0    & 0.89 & 52.98            \\
1111112233 & 451.936         & 0.88 & 587.0    & 0.89 & 54.20            \\
1111113333 & 451.936         & 0.88 & 587.0    & 0.89 & 52.96            \\
1111114444 & 451.936         & 0.88 & 587.0    & 0.89 & 51.61            \\
1111222222 & 421.504         & 0.82 & 553.7    & 0.84 & 51.69            \\
1111222334 & 421.504         & 0.82 & 553.7    & 0.84 & 52.20            \\
1111223344 & 421.504         & 0.82 & 553.7    & 0.84 & 52.54            \\
1222333444 & 375.856         & 0.73 & 503.6    & 0.77 & 49.15            \\ \bottomrule
\end{tabular}%
}
\caption{Results For Weights Pruning on CIFAR 10}
\label{tab:my-table}
\end{table}

\begin{table}[]
\centering
\resizebox{\textwidth}{!}{%
\begin{tabular}{@{}llcccc@{}}
\toprule
Codename   & PARAs(K) & \%   & FLOPs(K) & \%   & Testing Accuracy \\ \midrule
1111111111 & 512.81   & 1.00 & 653.80   & 1.00 & 54.78            \\
1111111122 & 476.37   & 0.92 & 619.68   & 0.94 & 54.10             \\
1111112222 & 439.93   & 0.85 & 585.57   & 0.89 & 52.87            \\
1111113333 & 471.28   & 0.91 & 589.48   & 0.90 & 53.96            \\
1111113344 & 467.92   & 0.91 & 589.06   & 0.90 & 53.90             \\
1111114444 & 464.57   & 0.90 & 588.64   & 0.90 & 54.44            \\
1111222222 & 403.49   & 0.78 & 551.46   & 0.84 & 52.74            \\
2222333444 & 372.59   & 0.72 & 488.47   & 0.74 & 52.35            \\ \bottomrule
\end{tabular}%
}
\caption{Results For Fixed Sub-network on CIFAR 10}
\label{tab:my-table}
\end{table}

\begin{figure}[h]
     \centering
     \begin{subfigure}[b]{0.49\textwidth}
         \centering
         \includegraphics[width=\textwidth]{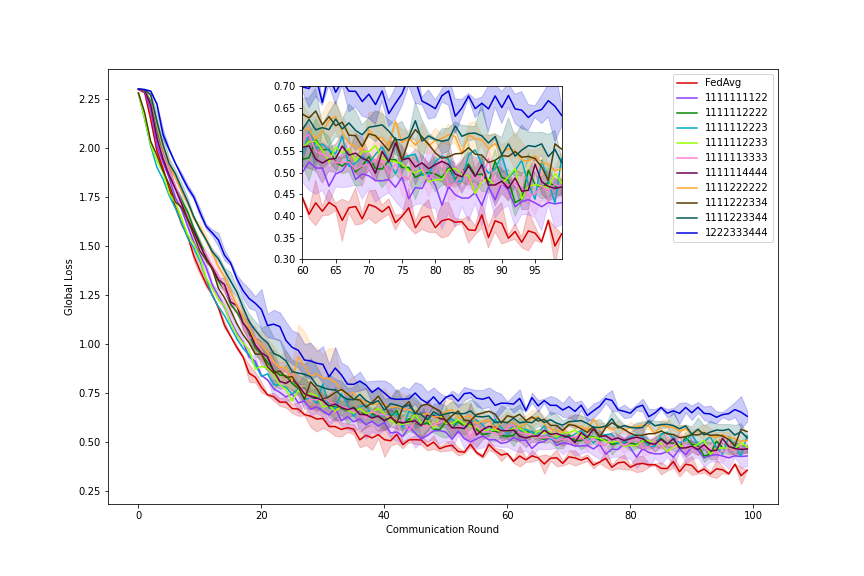}
         \caption{Global Loss}
         \label{fig:1}
     \end{subfigure}
     \hfill
     \begin{subfigure}[b]{0.49\textwidth}
         \centering
         \includegraphics[width=\textwidth]{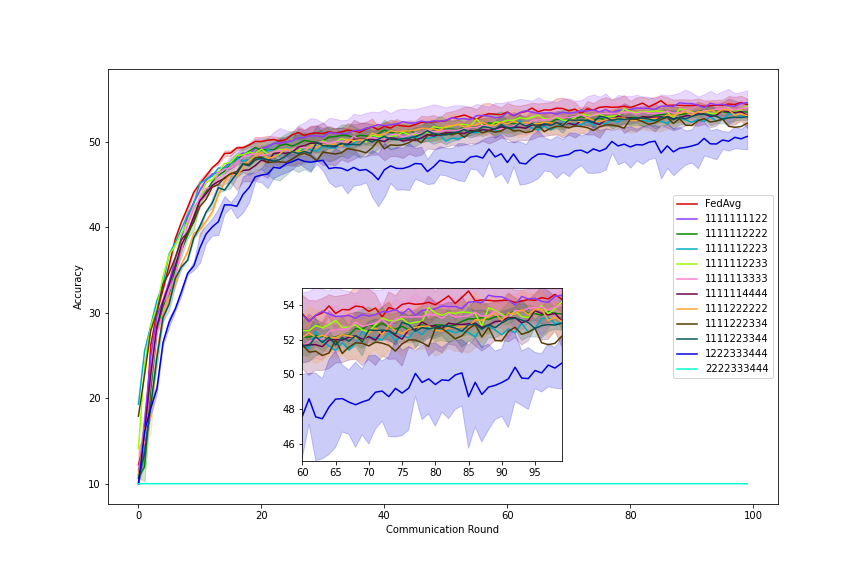}
         \caption{Accuracy}
         \label{fig:three sin x}
     \end{subfigure}
        \caption{Results on Weights Pruning on CIFAR10 IID}
        \label{fig:1}
\end{figure}

\begin{figure}[h]
     \centering
     \begin{subfigure}[b]{0.49\textwidth}
         \centering
         \includegraphics[width=\textwidth]{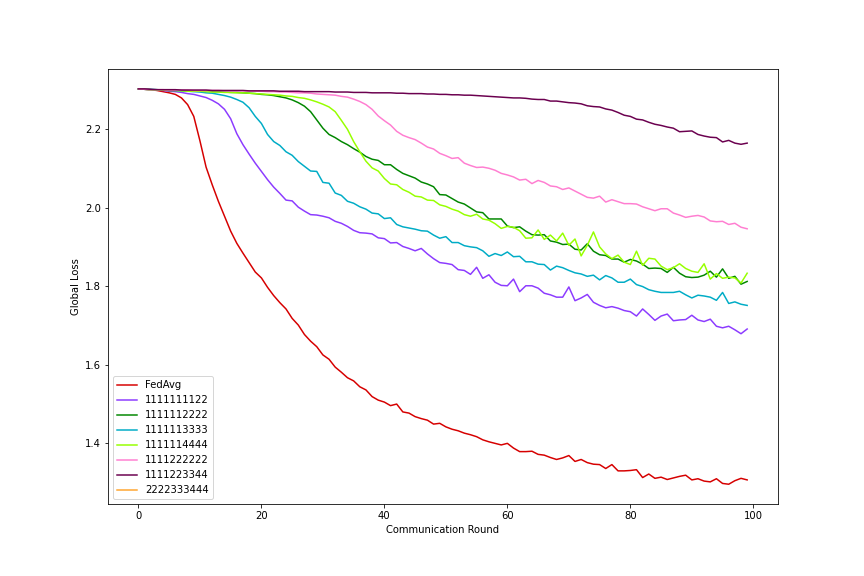}
         \caption{Global Loss}
         \label{fig:1}
     \end{subfigure}
     \hfill
     \begin{subfigure}[b]{0.49\textwidth}
         \centering
         \includegraphics[width=\textwidth]{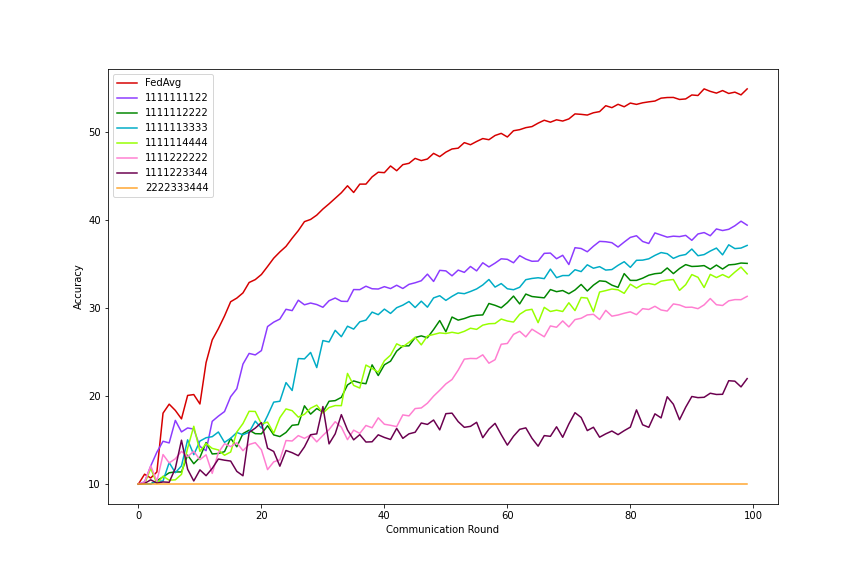}
         \caption{Accuracy}
         \label{fig:three sin x}
     \end{subfigure}
        \caption{Results on Pruning with pre-trained mask on CIFAR10 IID}
        \label{fig:1}
\end{figure}

\begin{figure}[h]
     \centering
     \begin{subfigure}[b]{0.49\textwidth}
         \centering
         \includegraphics[width=\textwidth]{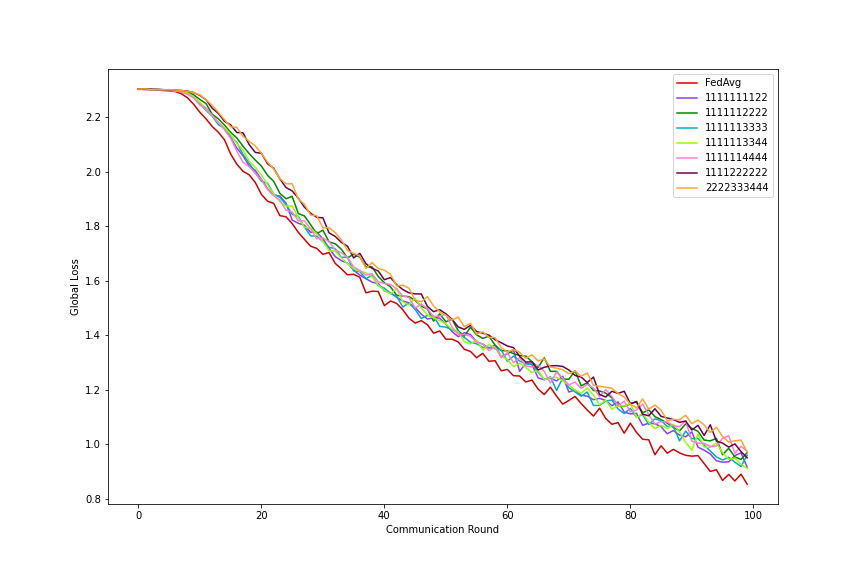}
         \caption{Global Loss}
         \label{fig:1}
     \end{subfigure}
     \hfill
     \begin{subfigure}[b]{0.49\textwidth}
         \centering
         \includegraphics[width=\textwidth]{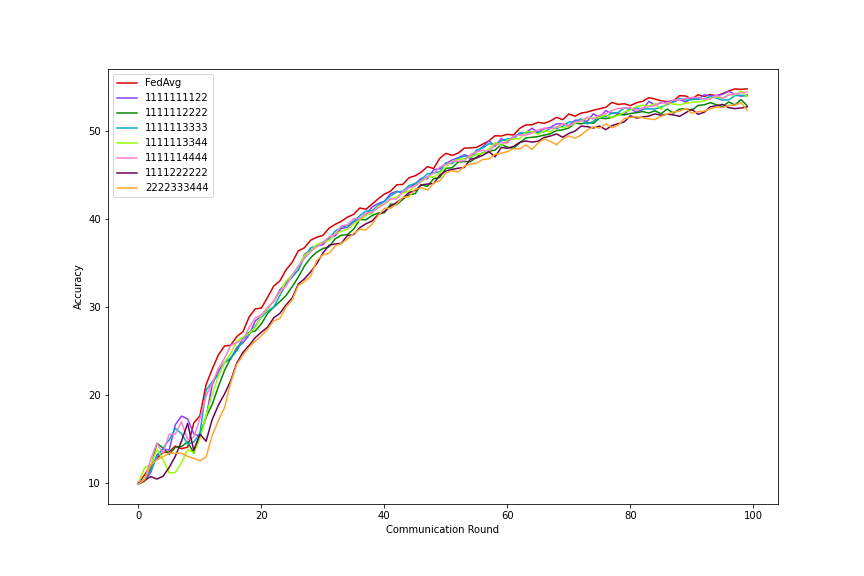}
         \caption{Accuracy}
         \label{fig:three sin x}
     \end{subfigure}
        \caption{Results on Fixed Sub-network Pruning on CIFAR10 IID}
        \label{fig:1}
\end{figure}

\end{document}